\renewcommand\cite{\citep}
\title{Evaluating and Mitigating Bias in Image Classifiers: A Causal Perspective Using Counterfactuals}
\author{
   \textbf{Saloni Dash$^1$},
   \textbf{Vineeth N Balasubramanian$^2$},
   \textbf{Amit Sharma$^1$} \\
  \small \texttt{ t-sadash@microsoft.com},
  \texttt{vineethnb@iith.ac.in} \\
  \small \texttt{amshar@microsoft.com} \\
  
\small 1 - Microsoft Research, Bangalore, Karnataka, India \\
\small 2 - Indian Institute of Technology Hyderabad, Telangana, India
}
\newif{\ifhidecomments}
    \newcommand{\saloni}[1]{}
    \newcommand{\amit}[1]{}
    \newcommand{\saloni}[1]{\textcolor{blue}{[#1 ---\textsc{saloni}]}}
    \newcommand{\amit}[1]{\textcolor{red}{[#1 ---\textsc{amit}]}}
\newcommand{\doo}{\operatorname{do}}
\newcommand{\E}{\mathbb{E}}
\newcommand{\vecx}{\mathbf{x}}
\newcommand{\veca}{\mathbf{a}}
\newcommand{\vecX}{\mathbf{X}}
\newcommand{\vecA}{\mathbf{A}}
\newcommand{\vecz}{\mathbf{z}}
\newcommand{\vecZ}{\mathbf{Z}}
\newcommand{\ourmethod}{\emph{ImageCFGen}}
\newcommand{\norm}[1]{\left\lVert#1\right\rVert}
\begin{document}

\date{}

\maketitle

\thispagestyle{preprint}

\begin{abstract}
Counterfactual examples for an input---perturbations that change specific features but not others---have been shown to be useful for evaluating bias of machine learning models, e.g., against specific demographic groups. However, generating counterfactual examples for images is non-trivial due to the underlying causal structure on the various features of an image. To be meaningful, generated perturbations need to satisfy constraints implied by the causal model. We present a method for generating counterfactuals by incorporating a structural causal model (SCM) in an improved variant of Adversarially Learned Inference (ALI), that  generates counterfactuals in accordance with the causal relationships between attributes of an image. Based on the generated counterfactuals, we show how to explain a pre-trained machine learning classifier, evaluate its bias, and mitigate the bias using a counterfactual regularizer. On the Morpho-MNIST dataset, our method generates counterfactuals comparable in quality to prior work on SCM-based counterfactuals (DeepSCM), while on the more complex CelebA dataset our method outperforms DeepSCM in generating high-quality valid counterfactuals. Moreover, generated counterfactuals are indistinguishable from reconstructed images in a human evaluation experiment and we subsequently use them to evaluate the fairness of a standard classifier trained on CelebA data. We show that the classifier is biased w.r.t. skin and hair color, and how counterfactual regularization can remove those biases.
\end{abstract}

\section{Introduction}
A growing number of studies have uncovered biases in image classifiers, particularly against marginalized demographic groups \cite{buolamwini2018gender, hendricks2018women, zhao2017men, bolukbasi2016man}. To avoid such biases, it is important to explain a classifier's predictions and evaluate its fairness.
Given any pre-trained machine learning (ML) classifier, counterfactual reasoning is an important way to explain the classifier's decisions and to evaluate its fairness. Counterfactual reasoning involves simulating an alternative input with some specific changes compared to the original input.  For example, to evaluate fairness of a classifier with respect to a sensitive attribute like \textit{skin color}, we can ask how the classifier's output will change if a face that was originally \textit{dark-skinned} is made \textit{light-skinned} while keeping everything else constant. Since the only change to the input is the sensitive attribute, counterfactual reasoning is considered more robust than comparing available faces with different skin colors (association) or comparing simulated inputs with \textit{light skin} or \textit{dark skin} (intervention) since these comparisons may include changes in addition to \textit{skin color}.

However, generating counterfactual (CF) examples for images is non-trivial.
For instance, consider the simple task of changing a person's hair color in an image of their face. While Generative Adversarial Networks (GANs) can generate new realistic faces with any hair color \cite{kocaoglu2018causalgan}, they are unable to generate the precise changes needed for a CF, i.e. changing hair color without changing hair style or other features of the face. Other explanation techniques based on perturbations such as occluding pixels~\cite{zeiler2014visualizing} also do not support counterfactual reasoning based on high-level concepts. 

There have been recent efforts on using GANs to generate counterfactuals using an added inference step (encoder).  Given a pre-trained GAN model, \citet{denton2019detecting} trained an encoder to match the input of a generated image. However, the latents thus encoded do not directly correspond to the given attributes of an image, and it is difficult to change a specific known attribute to generate a counterfactual. To change an attribute, Joo and K{\"a}rkk{\"a}inen  \cite{joo2020gender} used the FaderNetwork architecture that inputs attributes of an image separately to the generator. However, their method does not account for causal relationships between attributes. Besides, while both these works use generated images to evaluate biases in a classifier, they do not provide any method to mitigate the found biases. 
We present a method for generating counterfactuals that is based on their formal causal definition, and present a novel counterfactual-based regularizer to mitigate biases in a given classifier. 

Formally, a valid counterfactual example for an image is defined with respect to a Structural Causal Model (SCM) over its attributes.
An SCM encodes the domain knowledge about how attributes affect each other in the form of a graph with attributes as the nodes and accompanying functional equations connecting the nodes. Generating a counterfactual, therefore, requires modeling both the underlying SCM for the attributes as well as the generative process that uses the attributes to model the resulting image. 
\newline \indent
In this paper, we present \ourmethod, a method that combines knowledge from a causal graph and  uses an inference mechanism in a GAN-like framework to generate counterfactual images.
We first evaluate \ourmethod\ on the Morpho-MNIST dataset to show that it generates counterfactual images comparable to a prior SCM-based CF generation method (DeepSCM) \cite{pawlowski2020deep}. Moreover, we show that our method is capable of generating high-quality valid counterfactuals for complex datasets like CelebA in comparison to DeepSCM.

Specifically, on the CelebA dataset, \ourmethod\ can generate CFs for facial attributes like \textit{Black Hair}, \textit{Pale Skin} etc. Based on these counterfactual images, we show that an image classifier for predicting attractiveness on the CelebA dataset exhibits bias with respect to \textit{Pale Skin}, but not with respect to attributes like \textit{Wearing Necklace}. We hence propose and demonstrate a bias mitigation algorithm which uses the counterfactuals to remove the classifier's bias with respect to sensitive attributes like \textit{Pale Skin}.
In summary, our contributions include: 
\begin{compactitem}
\item \ourmethod, a method that uses inference in a GAN-like framework to generate counterfactuals based on attributes learned from a known causal graph. 
\item  Theoretical justification that under certain assumptions, CFs generated by \ourmethod\ satisfy the definition of a counterfactual as in Pearl \cite{pearl2009book}.
\item Detailed experiments on Morpho-MNIST and CelebA datasets that demonstrate the validity of CFs generated by \ourmethod\ in comparison to DeepSCM \cite{pawlowski2020deep}.

\item Evaluating fairness of an image classifier and explaining its decisions using counterfactual reasoning. 
\item A regularization technique using CFs to mitigate bias w.r.t. sensitive attributes in any image classifier.
\end{compactitem}

\section{Related Work}
Our work bridges the gap between generating counterfactuals and evaluating fairness of image classifiers. 

\subsection{Counterfactual Generation}
Given the objective to generate a data point $X$ (e.g., an image) based on a set of attributes $A$, Pearl's ladder of causation \cite{pearl2019seven} describes three types of distributions that can be used: \textit{association} $P(X|A=a)$, \textit{intervention} $P(X|\doo(A=a))$ and \textit{counterfactual} $P(X_{A=a} | A=a', X=x')$. 
In the associational distribution $P(X|A=a)$, $X$ is conditioned on a specific attribute value $a$ in the observed data. For e.g., conditional GAN-based methods \cite{mirza2014conditional} implement association between attributes and the image. In the interventional distribution $P(X|\doo(A=a)$, $A$ is changed to the value $a$ irrespective of its observed associations with other attributes. Methods like CausalGAN \cite{kocaoglu2018causalgan} learn an interventional distribution of images after changing specific attributes, and then generate new images that are outside the observed distribution (e.g., women with moustaches on the CelebA faces dataset \cite{kocaoglu2018causalgan}).
\newline \indent
The counterfactual distribution $P(X_{A=a}| A=a', X=x')$ denotes the distribution of images related to a specific image $x'$ and attribute $a'$, if the attribute of the same image is changed to a different value $a$. In general, generating counterfactuals is a harder problem than generating interventional data since we need to ensure that everything except the changed attribute remains the same between an original image and its counterfactual. \citet{pawlowski2020deep} recently proposed a method for generating image counterfactuals using a conditional Variational Autoencoder (VAE) architecture. While VAEs allow control over latent variables, GANs have been more successful over recent years in generating high-quality images. Thus, we posit that a GAN-based method is more ideally suited to the task of CF generation in complex image datasets, especially when the generated images need to be realistic to be used for downstream applications such as fairness evaluation. We test this hypothesis in Section~\ref{sec:results}. 
\newline \indent
Independent of our goal, there is a second interpretation of a ``counterfactual'' example w.r.t. a ML classifier~\cite{wachter2017counterfactual}, referring to the smallest change in an input that changes the prediction of the ML classifier.  \cite{liu2019generative} use a standard GAN with a distance-based loss to generate CF images close to the original image. However, the generated CFs do not consider the underlying causal structure -- terming such images as CFs can be arguable from a causal perspective. Besides, these CFs are not perceptually interpretable -- ideally, a counterfactual image should be able to change only the desired attribute while keeping the other attributes constant, which we focus on in this work.

\subsection{Fairness of Image Classifiers}
Due to growing concerns on bias against specific demographics in image classifiers \cite{buolamwini2018gender},  methods have been proposed to inspect what features are considered important by a classifier~\cite{sundararajan2017axiomatic}, constrain classifiers to give importance to the correct or unbiased features~\cite{ross2017right}, or enhance fairness by generating realistic images from under-represented  groups~\cite{mcduff2019bias}. 
Explanations, to study the fairness of a trained model, can also be constructed by perturbing  parts of an image that change the classifier's output, e.g., by  occluding an image region~\cite{zeiler2014visualizing} or by changing the smallest parts of an image that change the classifier's output~\cite{fong2017interpretable,luss2019generating,goyal2019counterfactual}. Such perturbation-based methods, however, are not designed to capture high-level concepts and do not enable study of fairness of classifiers w.r.t. human-understandable concepts (e.g. gender or race). 
\newline \indent 
Existing work closest to our efforts include two adversarially-trained generative models to generate counterfactuals for a given image. 
\cite{denton2019detecting} changed attributes for a given image by linear interpolations of latent variables using a standard progressive GAN~\cite{karras2018progressive}. Similarly,  
\cite{joo2020gender} used a Fader network architecture~\cite{lample2017fader} to change attributes. However, both these works ignore the causal structure associated with attributes of an image. In analyzing bias against an attribute, it is important to model the downstream changes \emph{caused} by changing that attribute~\cite{kusner2017counterfactual}. For instance, for a chest MRI classifier, age of a person may affect the relative size of their organs~\cite{pawlowski2020deep}; it will not be realistic to analyze the effect of age on the classifier's prediction without learning the causal relationship from age to organ size. 
Hence, in this work, we present a different architecture that can model causal relationships between attributes and provide valid counterfactuals w.r.t. an assumed structural causal model. In addition,  using these counterfactuals, 
we present a simple regularization technique  that can be used to decrease bias in any given classifier.
\section{SCM-Based Counterfactuals}
Let  $\vecX = \vecx \in \mathcal{X}$ denote the image we want to generate the counterfactual for, and let $\vecA = \veca = \{a_i\}_{i=1}^{n} \in \mathcal{A}$ be its corresponding attributes. In the case of human faces, attributes can be binary variables ($\in \{0,1\}$) like \textit{Smiling}, \textit{Brown Hair}; or in the case of MNIST digits, continuous attributes ($\in \mathbb{R} $) like \textit{thickness}, \textit{intensity}, etc.
A continuous attribute is scaled so that it lies in the range of $[0,1]$. We have a training set $\mathcal{D}$ containing ($\vecx$, $\veca$) (image, attribute) tuples. Given an image ($\vecx$, $\veca$), the goal is to  generate a counterfactual image with the attributes changed to $\veca_c$.

\subsection{SCM over Attributes}\label{subsec_scm_over_attribs}
We assume an SCM for the true data-generating process that defines the relationship among the attributes $\veca$, and from the attribute to the image $\vecx$. For instance, with two binary attributes (\textit{Young}, \textit{Gray Hair}) for an image $\vecx$ of a face,  the true causal graph can be assumed to be $\textit{Young} \to \textit{Gray Hair} \to \vecx$. We separate out the graph into two parts: relationships amongst the attributes ($\mathcal{M}_a$), and relationships from the attributes to the image ($\mathcal{M}_x$). We call $\mathcal{M}_a$ as the \textit{Attribute-SCM} and model $\mathcal{M}_x$ as a generative model, given the attributes. 

The Attribute-SCM ($\mathcal{M}_a$) consists of a causal graph structure and associated structural assignments. We assume that the causal graph structure is known. Given the graph structure, Attribute-SCM learns the structural assignments between attributes. E.g., given \textit{Young} $\rightarrow$ \textit{Gray Hair}, it learns the function $g$ such that $\textit{Gray Hair} = g(\textit{Young}, \epsilon)$, where $\epsilon$ denotes independent random noise.
We use the well-known Maximum Likelihood Estimation procedure in Bayesian Networks \cite{ding2010probabilistic} to estimate these functions for the attributes, but other methods such as  Normalizing Flows \cite{rezende2015variational,pawlowski2020deep} can also be used.

 Note that counterfactual estimation requires knowledge of both the \textit{true} causal graph and the \textit{true} structural equations; two SCMs may entail exactly the same observational and interventional distributions, but can differ in their counterfactual values~\cite{karimi2020algorithmic}. In most applications, however,  it is impractical to know the true structural equations for each edge of a causal graph. Therefore, here we make a simplifying empirical assumption: while we assume a known causal graph for attributes, we estimate the structural equations from observed data. That is, we assume that the data is generated from a subset of all possible SCMs (e.g., linear SCMs with Gaussian noise) such that  the structural equations can be uniquely estimated from data. Details on Attribute-SCM are in Appendix A.

\subsection{Image Generation from Attribute-SCM}
For modeling the second part of the SCM  from attributes to the image ($\mathcal{M}_x$), we build a generative model that contains an encoder-generator architecture. 
Given an Attribute-SCM (either provided by the user or learned partially, as in Sec \ref{subsec_scm_over_attribs}), the proposed method \ourmethod\ has  three steps to generate a counterfactual for an image $(\vecx,\veca)$ such that the attributes are changed to $\veca'$ (architecture in Figure~\ref{cfgen}). 
\begin{compactitem}
    \item An encoder $E: (\mathcal{X}, \mathcal{A}) \to \vecZ$ infers the latent vector $\vecz$ from $\vecx$ and $\veca$, i.e. $\vecz=E(\vecx, \veca)$ where $\vecZ = \vecz \in \mathbb{R}^m$. 
    \item The Attribute-SCM  intervenes on the desired subset of attributes that are changed from $\veca$ to $\veca'$, resulting in output $\veca_c$. Specifically, let $\veca_k \subseteq \veca$ be the subset of attributes that changed between the inputs $\veca$ and $\veca'$. For every $a_i\in \veca_k$, set its value to $a'_i$, then change the value of its descendants in the SCM graph by plugging in the updated values in the structural equations (see Lines 5-6 in Algorithm \ref{algorithm1}, Appendix A).
    \item Generator $G: (\vecZ, \mathcal{A}) \to \mathcal{X}$ takes as input $(\vecz, \veca_c)$ and generates a counterfactual $\vecx_c$, where $\vecz \in \vecZ \subseteq \mathbb{R}^m$. 
\end{compactitem}

\begin{figure}[h]
\centering
\includegraphics[width = 0.9\columnwidth]{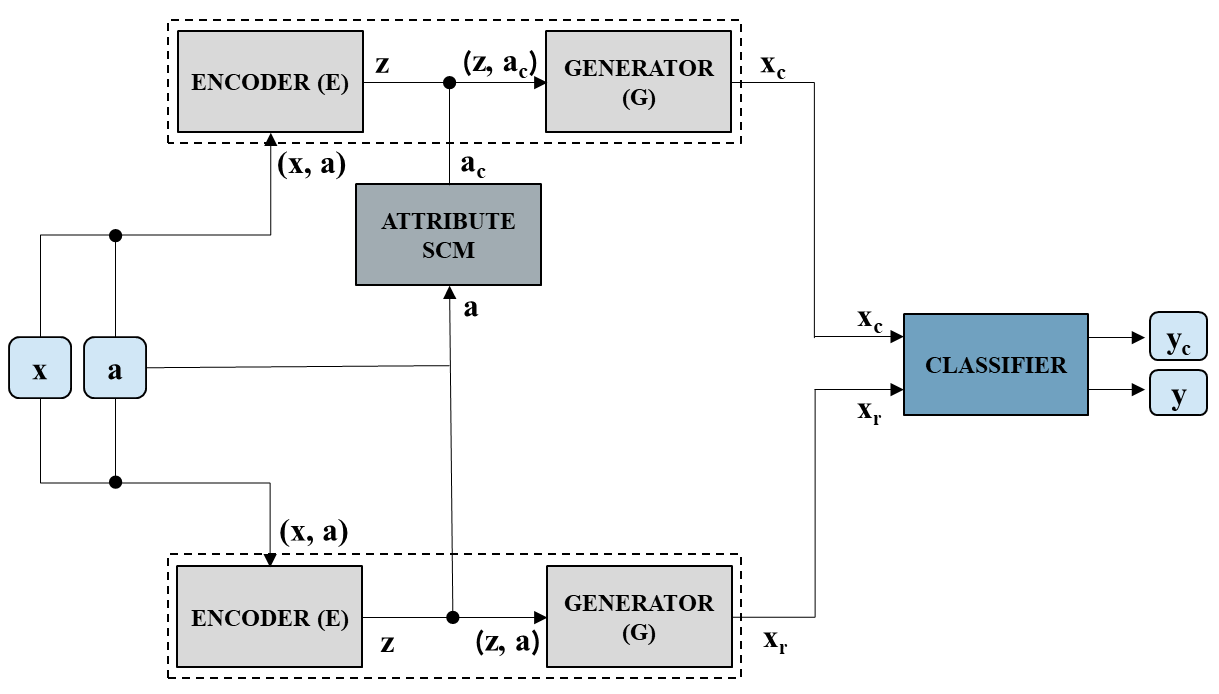}
\caption{\textbf{Counterfactual Generation using \ourmethod}. The top half of the figure shows the CF generation procedure, and the bottom half of the figure shows the reconstruction procedure.
Finally, the reconstructed image $x_r$ and the counterfactual image $x_c$ are used for a downstream task like fairness evaluation of a classifier. }

\label{cfgen}
\end{figure}

The above method for CF generation can be written as: 
\begin{equation}\label{counterfactual}
\vecx_c = G(E(\vecx,\veca), \veca_c)
\end{equation}

The complete algorithm is shown in Algorithm~\ref{algorithm1} in Appendix A. For our experiments, we use a novel improved variant of Adversarially Learned Inference \cite{dumoulin2016adversarially} to train the encoder and generator. However, \ourmethod\ can be extended to any encoder-generator (decoder) architecture.

\subsection{Correspondence to Counterfactual Theory}
The above architecture maps directly to the three steps for generating a valid counterfactual, for $(\vecx,\veca)$ as in~\cite{pearl2009book}:
\begin{compactitem}
    \item \textbf{Abduction}: Infer latent $\vecz$ given the input $(\vecx, \veca)$ using the encoder.
    \item \textbf{Action}: Let $\veca_k \subseteq \veca$ be the set of $k$ attributes that one wants to intervene on. Set attribute $a_i \rightarrow a'_i $  $\forall a_i \in \veca_k$, where $\veca'_k = \{a'_i\}_{i=1}^k$.
    \item \textbf{Prediction}: Modify all the descendants of $\veca_k$ according to the SCM equations learned by Attribute-SCM. This outputs $\veca_c$, the intervened attributes. Use $\vecz$ from the encoder and $\veca_c$ from the Attribute-SCM and input it to the generator to obtain the counterfactual $x_c$.
\end{compactitem}

The proof that Equation \ref{counterfactual}  corresponds to generating a valid counterfactual is in Appendix B.

\subsection{Implementing the Encoder and Generator}
Many studies have reported generating high-quality, realistic images using GANs \cite{wang2018high, karras2018progressive, karras2019style}. However, vanilla GANs lack an inference mechanism where the input $\vecx$ can be mapped to its latent space representation $\vecz$. We hence use Adversarially Learned Inference (ALI) \cite{dumoulin2016adversarially}, which integrates the inference mechanism of variational methods like VAEs in a GAN-like framework, thereby leveraging the generative capacity of GANs as well as providing an inference mechanism. We generate the images using a conditional variant of ALI where the model is conditioned on the attributes $\veca$ while generating an image. 
\newline \indent 
An ALI-based method, however, has two limitations: (1) generation capabilities are limited when compared to state-of-the-art \cite{karras2019style}; and (2) reconstructions are not always faithful reproductions of the original image~\cite{dumoulin2016adversarially}. Image reconstructions are important in the counterfactual generation process because they indicate how good the inferred latent space variable $z$ is, which is used in the abduction step of generating counterfactuals.  We address both these issues by using a style-based generator for better generation and a cyclic cost minimization algorithm for improved reconstructions. We refer to our ALI model as Cyclic Style ALI (CSALI) and describe each of these components below.
\newline \noindent  \textbf{Adversarially Learned Inference.}
ALI uses an encoder $E$ and a generator $G$ in an adversarial framework, where the encoder learns to approximate the latent space distribution from the input $\vecx$ and attributes $\veca$, and the generator learns to generate realistic images from the latent space distribution and attributes $\veca$. The discriminator $D$ is optimized to differentiate between pairs of tuples containing $\{$the real image, the corresponding approximated latent space variable, attributes$\}$ from joint samples of $\{$the generated image, the input latent variable, attributes$\}$. The Encoder and Generator are optimized to fool the discriminator. Unlike \cite{dumoulin2016adversarially} which uses an embedding network, we directly pass the attributes to the Generator, Encoder and Discriminator since we found that it helped in conditioning the model on the attributes better. The conditional ALI hence optimizes:
\begin{equation}
\begin{medsize}
\begin{aligned}
\min_{G,E}\max_{D}V(D,G,E) = \mathbb{E}_{q(\vecx)p(\veca)}[\log(D(\vecx, E(\vecx,\veca), \veca))] 
\\
+ \mathbb{E}_{p(\vecz)p(\veca)}[\log(1 - D(G(\vecz, \veca), \vecz, \veca))]
\end{aligned}
\end{medsize}
\end{equation}
\noindent where $G$ is the generator, $E$ is the encoder and $D$ is the discriminator. $q(\vecx)$ is the distribution for the images, $p(z) \sim \mathcal{N}(0,1)$ and $p(a)$ is the distribution of the attributes. 
Image reconstructions are defined as:
\begin{equation}\label{reconstructed}
 \vecx_r = G(E(\vecx,\veca), \veca)
 \end{equation}
 \newline \noindent 
\textbf{Style-Based Generator.}
Style-based generator architectures (StyleGANs) implicitly learn separations of high-level attributes of images like hair color, pose, etc \cite{karras2019style,karras2020analyzing}, and generate images that are indistinguishable from real images \cite{zhou2019hype}. To improve generation, we replace the ALI generator with the style-based generator architecture in \cite{karras2019style}. Details of the architecture are provided in the Appendix D.
\newline \noindent
\textbf{Cyclic Cost Minimization.}
To improve image reconstructions (which in turn indicate the goodness of the learned latents $\vecz$), we employ the cyclic cost minimization algorithm in \cite{dogan2020semi} after training the style-based ALI model. The generator is fixed, and the encoder is fine-tuned to minimize a reconstruction loss computed using: (i) error in the image space $\mathcal{L}_{\vecx} = \mathbb{E}_{\vecx \sim q(\vecx)}\norm{\vecx - G(E(\vecx,\veca))}_2$; and (ii) error in the latent space $\mathcal{L}_{\vecz} = \mathbb{E}_{\vecz \sim p(\vecz)}\norm{\vecz - E(\vecx,\veca)}$, where $G(E(\vecx,\veca))$ is the reconstructed image $\vecx_r$ according to Eqn \ref{reconstructed} and $E(\vecx,\veca)$ is the encoder's output $\vecz_r$, which is expected to capture the image's latent space distribution. We fine-tune the encoder using the above reconstruction loss \emph{post-hoc} after obtaining a good generator in order to explicitly improve image reconstructions. 

\section{Applications of Generated CFs}
We now show how the counterfactuals generated using \ourmethod can be used to evaluate fairness of, as well as explain a given image classifier. We will also present a method to mitigate any fairness biases in the classifier. Suppose we are given a pre-trained image classifier $\hat{f}:\mathcal{X} \to \mathcal{Y}$, such that $\hat{f}(\vecx) = \hat{y}$, where $\vecx \in \mathcal{X}$ refers to the images and $ \hat{y} \in \mathcal{Y}$ refers to the classifier's discrete outcome. Let $\veca \in \mathcal{A}$ be the corresponding image attributes, and let $\veca_S \in \mathcal{A_S} \subseteq \mathcal{A}$ be the set of sensitive attributes we want to evaluate the classifier on.

\subsection{Evaluating Fairness of a Classifier} 
We can use the generated CFs to estimate biases in a given classifier that predicts some discrete outcome $\hat{y} = y$ (like \textit{Attractive}). In an ideal scenario, the latent variable $\vecz$ for the real image and its reconstructed image would match exactly. However, experiments using ALI demonstrate that the reconstructed images are not perfect reproductions of the real image \cite{dumoulin2016adversarially, donahue2019large, dandi2020generalized}. Therefore, for objective comparison, we compare classification labels for reconstructed images ($\vecx_r$ from Eqn \ref{reconstructed}) and counterfactual images ($\vecx_c$ from Eqn \ref{counterfactual}), since the reconstructed images share the exact same latent $\vecz$ as the CF image (and hence the CF will be valid). We hence refer to the reconstructed images (which share the latent $\vecz$ with the CF) as \textit{base} images for the rest of the paper.

We characterize a classifier as \emph{biased} w.r.t. an attribute if: (a) it changes its classification label for the CF image (obtained by changing that attribute); and (b) if it changes the label to one class from another class more often than vice versa (for CFs across test images obtained by changing the considered attribute). (To illustrate the second condition, if setting hair color as blonde makes test images consistently be classified as attractive more often than otherwise, this indicates bias.)  We capture these intuitions as a formula for the degree of bias in a binary classifier w.r.t. a considered attribute:
\begin{equation}\label{bias}
\begin{aligned}
    \text{bias} = p(y_r \neq y_c)(p(y_r = 0, y_c = 1 |  y_r \neq y_c) 
    \\
    - p(y_r = 1, y_c = 0 | y_r \neq y_c))
    \end{aligned}
\end{equation}
where $y_r$ is the classification label for the reconstructed image, and $y_c$ is the classification label for the CF image. Using Bayes Theorem, Eqn \ref{bias} reduces to: 
\begin{equation}\label{bias-simple}
 \text{bias} = p(y_r = 0, y_c = 1) - p(y_r = 1, y_c = 0)
 \end{equation}
The bias defined above ranges from -1 to 1. It is 0 in the ideal scenario when the probability of CF label changing from 0 to 1 and vice-versa is the same (=0.5). The bias is 1 in the extreme case when the CF label always changes to 1, indicating that the classifier is biased \textit{towards} the counterfactual change. Similarly if the CF label always changes to 0, the  bias will be -1, indicating that the classifier is biased \textit{against} the counterfactual change. 
In Appendix C, we show that a classifier with zero bias in the above metric is fair w.r.t. the formal definition of counterfactual fairness~\cite{kusner2017counterfactual}.

\subsection{Explaining a Classifier}
We can also use the CFs from \ourmethod\ to generate explanations for a classifier. For any input $\vecx$, a local \textit{counterfactual importance score} for an attribute $\veca_i$ states how the classifier's prediction changes upon changing the value of $\veca_i$. Assuming $\veca_i$ can take binary values $a'=1$ and $a=0$, the local \textit{CF importance score} of $\veca_i$ is given by:
\begin{equation}
\begin{split}
    \E_Y[Y_{\veca_i \leftarrow a'}| \vecx, \veca] - \E_Y[Y_{\veca_i \leftarrow a}| \vecx, \veca] 
    \\
     =y_{\veca_i \leftarrow a'}|\vecx, \veca - y_{\veca_i \leftarrow a}|\vecx, \veca
    \end{split}
    \end{equation}
    
where $Y$ is the random variable for the classifier's output, $y$ is a value for the classifier's output, and the above equality is for a deterministic classifier. For a given $(\vecx, \veca)$, the score for each attribute (feature) can be ranked to understand the relative importance of features. To find global feature importance, we average the above score over all inputs.

\subsection{Bias Mitigation for a Classifier}
Finally, in addition to evaluating a classifier for bias, CFs generated using \ourmethod can be used to remove bias from a classifier w.r.t. a sensitive attribute. Here we propose a \textit{counterfactual} regularizer to ensure that an image and its counterfactual over the sensitive attribute obtain the same prediction from the image classifier.  For an image $\vecx$, let $logits(\vecx)$ be the output of the classifier $\hat{f}$  before the sigmoid activation layer. To enforce fairness, we can finetune the classifier by adding a regularizer that the logits of the image and its counterfactual should be the same, i.e.

\begin{equation}
    \text{BCE}(y_{true}, \hat{f}(\vecx)) + \lambda \text{MSE}(\text{logits}(\vecx_r), \text{logits}(\vecx_c))
    \label{eq:bias-mitigation}
\end{equation}
where BCE is the binary cross-entropy loss, $y_{true}$ is the ground truth label for the real image $\vecx$, $\lambda$ is a regularizing hyperparameter, MSE is the mean-squared error loss, and $\vecx_r$ and $\vecx_c$ are defined in Eqns \ref{reconstructed} and \ref{counterfactual} respectively.

\section{Experiments and Results}\label{sec:results}
Considering the limited availability of datasets with known causal graphs, we study \ourmethod\ on the Morpho-MNIST dataset (a simple dataset for validating our approach), and on the CelebA dataset (which provides an important context for studying bias and fairness in image classifiers). Specifically, we study the following: 
\begin{compactitem}
    \item \textbf{Validity of \ourmethod\ CFs.} We use the Morpho-MNIST dataset which adds causal structure on the MNIST images, to compare counterfactuals from \ourmethod\ to the Deep-SCM method~\cite{pawlowski2020deep}. We show that CFs from \ourmethod\ are comparable to those from DeepSCM, thus validating our approach.
    \item \textbf{Quality of \ourmethod\ CFs.} On the more complex CelebA dataset, we evaluate the quality of \ourmethod\ CFs by quantifying the generation and reconstruction, using established benchmark metrics. We find that using the proposed CSALI architecture offers significant advantages over the standard ALI model. We also contrast the quality and validity of the generated CFs with those of DeepSCM, and find that \ourmethod\ outperforms DeepSCM.
    \item \textbf{Fairness Evalution and Explanation of a ML Classifier Using  \ourmethod\ CFs.} We show using \ourmethod\ that a standard pre-trained classifier on CelebA that predicts whether a face is attractive or not, has bias w.r.t. attribute \textit{Pale Skin} across all three hair colors (\textit{Black Hair}, \textit{Blond Hair}, \textit{Brown Hair}). We also explain the classifier's predictions using CFs. 
    \item \textbf{Bias Mitigation of a ML Classifier Using \ourmethod\ CFs.} Finally, we show how our proposed method can be used to decrease detected bias in the classifier for the attributes mentioned above.   
\end{compactitem}  
\noindent \textbf{Baselines and Performance Metrics}. We compare to DeepSCM's \cite{pawlowski2020deep} results on Morpho-MNIST and CelebA.
We present both quantitative and qualitative performance of our method for these datasets. While we follow the metrics of \cite{pawlowski2020deep} for Morpho-MNIST, for CelebA we report quantitative scores like Fréchet Inception Distance \cite{heusel2017gans} (FID) and Mean Squared Error (MSE) to compare generation and reconstruction quality with the base ALI method. For measuring quality of generated counterfactuals, we report  human evaluation scores, in addition to qualitative results.  For bias evaluation, we compare \ourmethod\ to affine image transformations like horizontal flip and brightness that are commonly used data augmentation techniques.

\noindent \textbf{Datasets}.
\textit{Morpho-MNIST} \cite{castro2019morpho} is a publicly available dataset based on MNIST \cite{lecun1998gradient} with interpretable attributes like \textit{thickness}, \textit{intensity}, etc. It was extended by \cite{pawlowski2020deep} to introduce morphological transformations with a known causal graph. The attributes are \textit{thickness (t)} and \textit{intensity (i)}, where \textit{thickness} $\to$ \textit{intensity} ($\to$ indicating causal effect). We extend this dataset by introducing an independent morphological attribute---\textit{slant (s)} from the original Morpho-MNIST dataset and digit \textit{label (l)} as an attribute. The causal graph for the dataset is given in Fig \ref{causalgraphmnist}.

\begin{figure}[!htb]
\centering
\begin{subfigure}{0.4\columnwidth}
  \centering
 \includegraphics[width = 0.85\columnwidth]{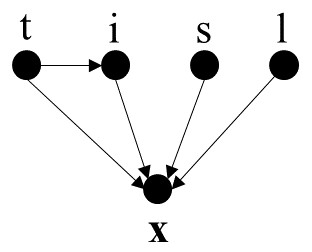}
 \caption{Morpho-MNIST}
 \label{causalgraphmnist}
\end{subfigure}
\begin{subfigure}{0.58\columnwidth}
  \centering
  \includegraphics[width = 0.65\columnwidth]{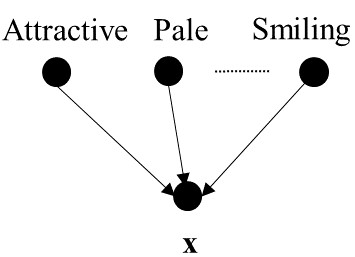}
  \caption{CelebA.}
  \label{celebagraph}
\end{subfigure}
\caption{\textbf{Causal Graphs for Morpho-MNIST and CelebA.} Attributes for Morpho-MNIST are \textit{thickness} t, \textit{intensity} i, \textit{slant} s and \textit{label} l; for CelebA are \textit{Pale}, \textit{Black Hair}, etc. In both graphs, attributes cause the image x.
}
\end{figure}

\textit{CelebA ~\cite{liu2015faceattributes}} is a dataset of 200k celebrity images annotated with 40 attributes like \textit{Black Hair}, \textit{Wearing Hat}, \textit{Smiling} etc. 
We train an image classifier on the dataset that predicts the attribute \textit{Attractive} as done in \cite{sattigeri2019fairness, ehrlich2016facial}. While explaining the classifier's decisions, we generate CFs for all attributes excluding \textit{Male}, \textit{Young} and \textit{Blurry}. For fairness evaluations, we focus on generating CFs for the attributes \textit{Black Hair}, \textit{Blond Hair}, \textit{Brown Hair}, \textit{Pale} and \textit{Bangs}. 
Similar to \cite{denton2019detecting}, we do not generate CFs for \textit{Male} because of inconsistent social perceptions surrounding gender, thereby making it difficult to define a causal graph not influenced by biases. Therefore, all attributes we consider have a clear causal structure (Fig \ref{celebagraph} shows the causal graph). Additionally, our method can also be utilized in the setting where the attributes are connected in a complex causal graph structure, unlike \cite{denton2019detecting, joo2020gender}. We show this by conducting a similar fairness and explanation analysis for a \textit{Attractive} classifier in Appendix O, where \textit{Young} affects other visible attributes like \textit{Gray hair}.

Details of implementation, architecture (including ALI) and training are provided in Appendix D. 

\subsection{Validity of \ourmethod\ CFs on Morpho-MNIST}
We generate CFs using \ourmethod\ on images from the Morpho-MNIST dataset by intervening on all four attributes - \textit{thickness}, \textit{intensity}, \textit{slant} and \textit{label} and observe how the image changes with these attributes. Fig \ref{mnistcnt} demonstrates CFs for a single image with label 0. Along the first column vertically, the label is changed from 0 to $\{1,4,6,9\}$ while the thickness, intensity and slant are kept constant. Then, as we proceed to the right in each row, the attributes of thickness, intensity and slant are changed sequentially but the  label is kept constant. Visually,  the generated counterfactuals change appropriately according to the intervened attributes. For instance, according to the causal graph in Fig \ref{causalgraphmnist}, changing the digit label should not change the digit thickness intensity and slant. That is precisely observed in the first column of Fig \ref{mnistcnt}. Whereas, changing the thickness should also change the intensity of the image which is observed in the third and fourth columns  of Fig \ref{mnistcnt}. Results of latent space interpolations and digit reconstructions are provided in Appendix E and F. We find that the encoder learns meaningful latent space representations, and the reconstructions are faithful reproductions of Morpho-MNIST digits.
\begin{figure}[h]
\centering
\includegraphics[width = 0.85\columnwidth]{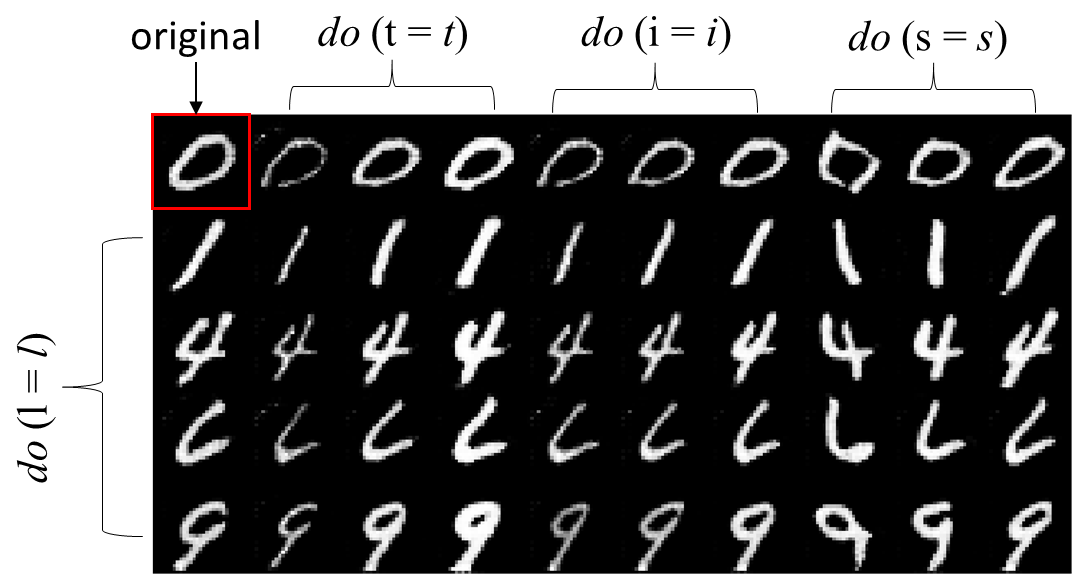}
    \caption{\textbf{Morpho-MNIST Counterfactuals.} Top-left cell shows a real image sampled from the test set. Vertically, rows correspond to interventions on the label, $do$(\textit{l} = 1, 4, 6, 9). Moving horizontally, columns correspond to interventions on thickness: $do$ (\textit{t}=1, 3, 5), intensity: $do$ (\textit{i} = 68, 120, 224), and slant: $do$ (\textit{s} = -0.7, 0, 1) respectively.  
    }
    \label{mnistcnt}
\end{figure}

\begin{figure}[!htb]
    \centering
    \includegraphics[width = 0.8\columnwidth]{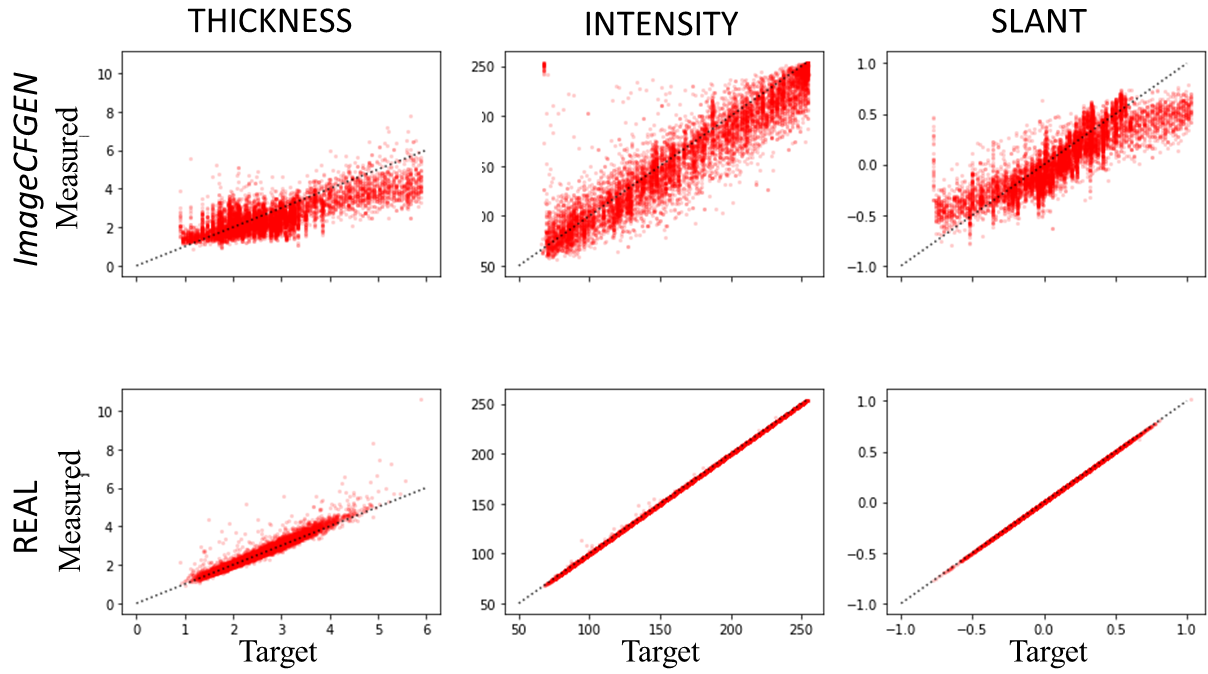}
    \caption{\textbf{Morpho-MNIST CFs.} For each attribute, Target ($x$ axis) is the desired value for the CF and Measured ($y$ axis) is the attribute value obtained from the generated counterfactual using a formula provided in the Morpho-MNIST dataset~\cite{pawlowski2020deep}. We compare between ground-truth CFs  and CSALI-generated CFs. In an ideal scenario (real samples), points should lie along the $y = x$ line. 
    % For the counterfactuals generated by \ourmethod, the points roughly lie along the diagonal. 
    }
 \label{mnistline}
\end{figure}

To quantify these observations, we randomly sample hundred values for \textit{thickness}, \textit{intensity} and \textit{slant} attributes and generate corresponding CFs for each test image. Fig \ref{mnistline} plots the target ground-truth attribute vs the measured  attribute in the CF image (measured using a formula from Morpho-MNIST~\cite{pawlowski2020deep}), and shows that all attributes are on an average clustered along the reference line of y = x with some variability. We quantify this variability in Table \ref{table1} using median absolute error, by comparing the CFs generated using \ourmethod\ vs those using the DeepSCM method~\cite{pawlowski2020deep} (we choose Median Absolute Error to avoid skewed measurements due to outliers). 
\ourmethod\ and DeepSCM are comparable on attributes of \textit{thickness} and \textit{intensity}, showing the validity of our CFs. We could not compare \textit{slant} since \cite{pawlowski2020deep} do not use \textit{slant} in their studies.
\begin{table}[!htb]
\centering
\begin{tabular}{l|l|l}

        & $do$ (thickness)  & $do$ (intensity) \\
        \hline
        \rule{0pt}{2.6ex}\ourmethod     & 0.408 $\pm$ 0.004          & \textbf{10.740 $\pm$ 0.161} \\
DeepSCM & \textbf{0.253 $\pm$ 0.002} & 12.519 $\pm$ 0.105        
\end{tabular}
\caption{\textbf{Median Abs. Error: \ourmethod\ and DeepSCM.} Lower is better.}

\label{table1}
\end{table}

\subsection{Quality of \ourmethod\ CFs on CelebA}
We now evaluate \ourmethod\ on CelebA by comparing the quality of the generated CF images against the vanilla ALI model and the DeepSCM model \cite{pawlowski2020deep}. 

\noindent \textbf{Generation Quality.} We show real images and their corresponding reconstructions using ALI and CSALI in rows (I), (II) and (IV) of Fig \ref{celebaablation} in Appendix K. While the reconstructions of both ALI and CSALI are not perfect, those of CSALI are significantly better than ALI. Moreover, they capture majority of the high-level features of the real images like \textit{Hair Color}, \textit{Smiling}, \textit{Pale}, etc.
We quantify this in Table \ref{table2} using Fréchet Inception Distance \cite{heusel2017gans} (FID) score for generated images, Mean Squared Error (MSE) between the real images $\vecx$ and reconstructed images $\vecx_r$ and Mean Absolute Error (MAE) between the real latent variable $\vecz$ and the encoder's approximation of the latent variable $\vecz_r$. We randomly sample 10k generated images to calculate FID scores, and randomly sample 10k real images for which we generate reconstructions and report the MSE and MAE metrics. We report these  metrics on the baseline model of ALI as well. We observe a significant improvement in generation quality after using a style-based generator and significant improvements in reconstruction with the cyclic cost minimization algorithm (refer to Appendix K for ablation study). We report MSE on images and MAE on latent space variables since the cyclic cost minimization algorithm \cite{dogan2020semi} uses these metrics to improve reconstructions. 

\begin{table}[!htb]
\centering
%\vspace{-10pt}
\begin{tabular}{l|l|l|l}

        & FID  & MSE ($x$,$x_r$) & MAE ($z, z_r$) \\
        \hline
        \rule{0pt}{2.6ex}ALI     & 67.133  & 0.177 & 1.938 \\
CSALI & \textbf{21.269} & \textbf{0.103} & \textbf{0.940}        
\end{tabular}
%%\vspace{-4pt}
\caption{\textbf{FID, MSE and MAE scores for ALI and Cyclic Style ALI (CSALI).} Lower is better.}
%\vspace{-10pt}
\label{table2}
\end{table}

\noindent \textbf{Counterfactual Quality.} We contrast the quality and validity of the CFs from \ourmethod\ with the CFs from DeepSCM \cite{pawlowski2020deep} (refer to Appendix H for our implementation of DeepSCM on CelebA).
To generate the counterfactual images, we intervene on the attributes of \textit{Black Hair}, \textit{Blond Hair}, \textit{Brown Hair}, \textit{Pale} and \textit{Bangs}. 
We observe in Figure \ref{icfgen_dscm} that the CFs from \ourmethod\ are qualitatively better than those from DeepSCM. \ourmethod\ CFs successfully change the hair color and skin color, in addition to adding bangs to the face (refer to Figure \ref{celebacnt} in Appendix J for more \ourmethod\ CFs and Appendix I for more comparisons with DeepSCM). In contrast, the CFs from DeepSCM only partially change the hair color and the skin color in columns (a) through (f) and fail to add bangs in column (g). 
\begin{figure}[!htb]
\centering
\includegraphics[width = \columnwidth]{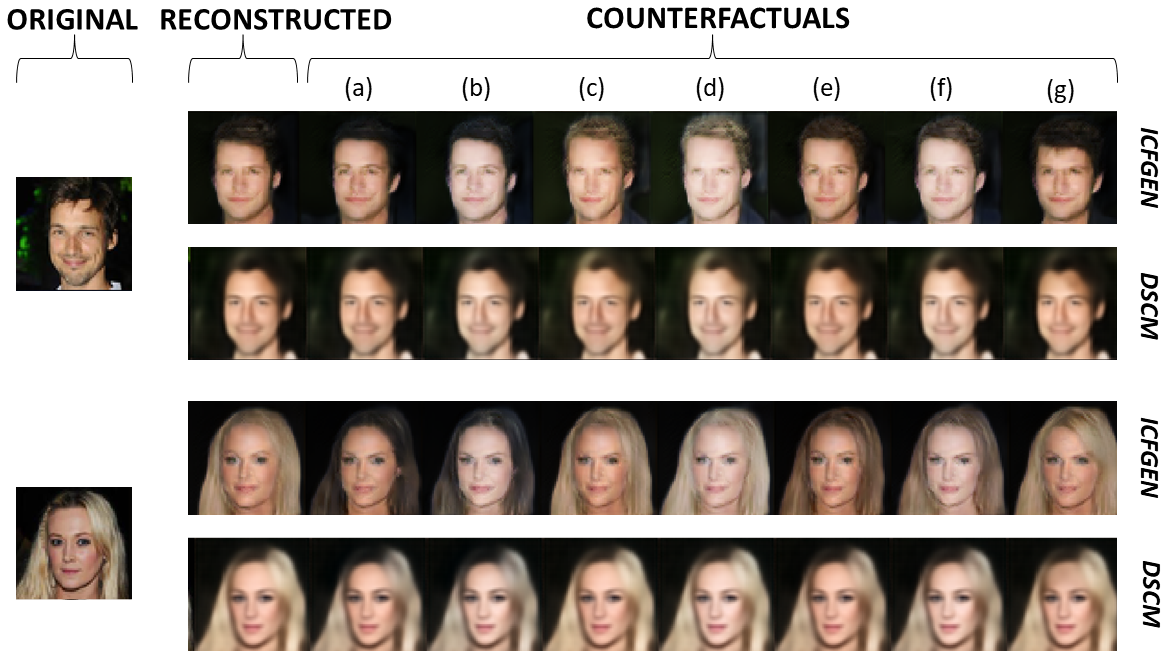}
\caption{\textbf{\ourmethod\ and DeepSCM Counterfactuals.} (a) denotes $do$ (black hair = 1) and (b) denotes $do$ (black hair = 1, pale =1). Similarly (c) denotes $do$ (blond hair = 1); (d) denotes $do$ (blond hair = 1, pale = 1); (e) denotes $do$ (brown hair = 1); (hf denotes $do$ (brown hair = 1, pale = 1); and (g) denotes $do$ (bangs = 1).}
\label{icfgen_dscm}
\end{figure}

We also perform a human evaluation experiment of the generated counterfactuals in Appendix L, which showed that the distribution of counterfactuals is identical to the distribution of their corresponding base images.

\subsection{Bias Evaluation \& Explaining a Classifier}

We train a classifier on the CelebA dataset to predict attractiveness of an image w.r.t. an attribute (architecture and training details in Appendix M). We then use the generated CFs to identify biases in the classifier.
We sample 10k points from the test set and generate seven CFs for each of them as shown in Fig \ref{icfgen_dscm} for different attributes. We only consider those images for which the corresponding attribute was absent in order to avoid redundancies. For instance, we filter out CF (c) of the second sample from Fig \ref{icfgen_dscm} since blond hair was already present in the base image. We then provide the generated CFs along with the base (reconstructed) image to the attractive classifier and compare their labels.
As a baseline comparison, we also pass images with affine transformations like horizontal flip and increasing brightness to the classifier. 
We treat the classifier's outputs as the probability of being labeled as attractive. Fig.~\ref{celebafair} shows these probabilities for the base image, affine transformations, and the CFs. If the classifier is fair w.r.t. these attributes, all points should be clustered along the y=x line. 
\begin{figure}[!htb]
\centering
  \includegraphics[width = \columnwidth]{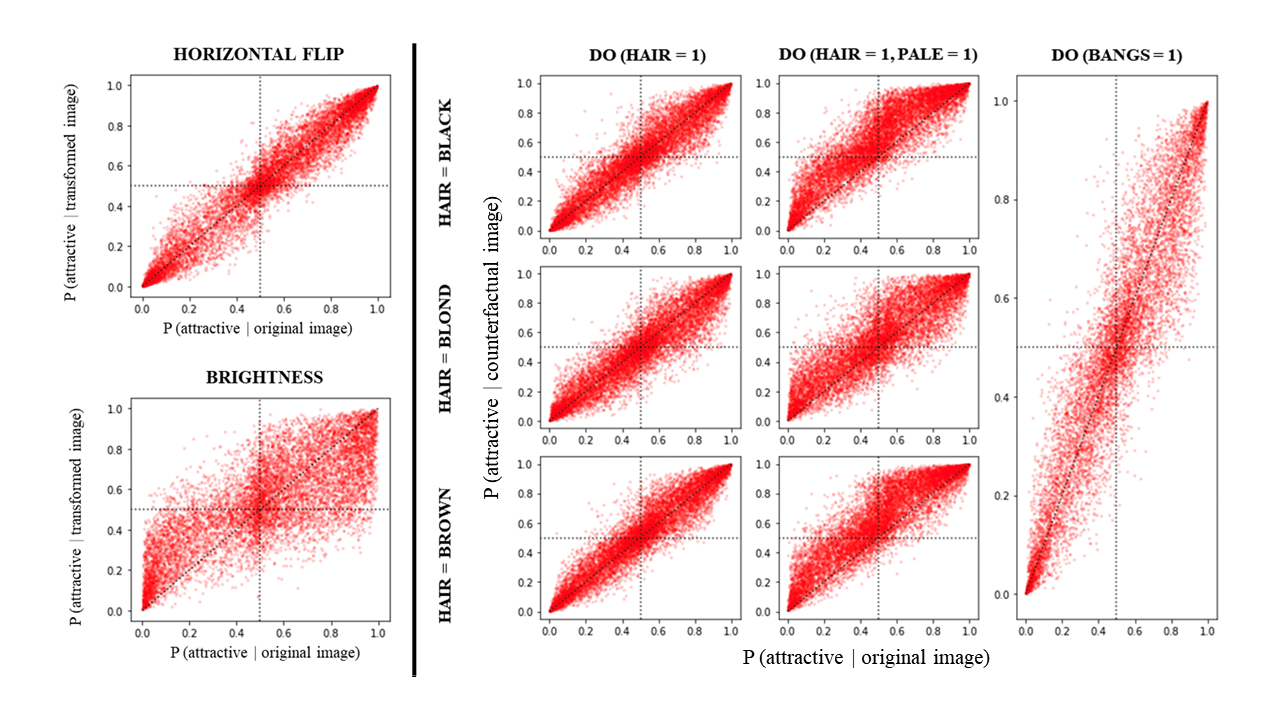}
  \caption{\textbf{Fairness Analysis.} Affine transformations \textit{(left)}, CFs \textit{(right)}. Each point in a scatter plot is a pair of a base image and its corresponding CF image. In the ideal case, all points should lie along the y = x line. To analyze the figures, divide the scatter plot into four quadrants formed by lines x = 0.5 and y = 0.5. Any point in the top left quadrant signifies that the attractive label was changed from 0 to 1 and vice-versa for the bottom right quadrant. 
}
  \label{celebafair}
\end{figure}

For the CF plots, for $do$ (black hair =1, pale = 1) and $do$ (brown = 1, pale = 1) almost all points are above the reference line, suggesting that the probability of being classified as attractive increases after applying these interventions. Not only does the probability increase, for $do$ (black hair = 1, pale = 1), 18\% of the CF labels are flipped from the base ones, and \textit{94\%} of these labels are changed from not attractive to attractive. In case of $do$ (brown hair = 1, pale = 1),  19\% of the CF labels are flipped and \textit{94\%} of the labels are flipped from not attractive to attractive. For the CF $do$ (blond hair = 1, pale = 1), 16\% of the labels are flipped and 74\% of the labels are flipped from attractive to not attractive. In comparison, the affine transformations are unable to provide a clear picture of bias. For horizontal flip, the points are equally distributed on both sides of the reference line y = x. In the case of brightness,  there is more variation. 

In Table \ref{estimates}, we quantify these observations using Eqn \ref{bias}. Our metric for bias measurement gives an overall estimate of the bias in  classifier, and provides an interpretable and uniform scale to compare biases among different classifiers. The reported bias values reflect the observations from Fig \ref{celebafair}. We observe that the CF of $do$ (brown hair = 1, pale = 1) has the highest positive bias amongst all CFs and affine transformations, i.e. the classifier is biased towards labeling these CFs as more attractive in contrast to the base image. Using CFs, we are able to detect other significant biases towards setting skin color as $pale=1$ for all hair colors (black, blond and brown). In contrast, using the baseline transformations, we are unable to detect skin color bias in the classifier since the calculated bias values are negligible. 
\begin{table}[tb]
\centering
\begin{tabular}{l|l|l|l}
& p($a_r \neq a_c$) & p(0 $\to$ 1) & bias \\
\hline
\rule{0pt}{2.6ex}horizontal\_flip             &      0.073             &              0.436                      &   -0.009   \\
brightness  &     0.192    &      0.498                               &   -0.001   \\
\hdashline
\rule{0pt}{2.6ex}black\_h &    0.103   &  0.586             &   0.018  \\
black\_h, pale  &     0.180            &            0.937                      &    \textbf{0.158} \\
blond\_h    &       0.115            &      0.413   &   - 0.02   \\
blond\_h, pale  &      0.155           &     0.738                                &    \textbf{0.073}  \\
brown\_h     &         0.099          &       0.704                              &    0.041  \\
brown\_h, pale  &      0.186           &               0.942                      &  \textbf{0.164}    \\
bangs    &          0.106         &                  0.526                   & 0.005   
\end{tabular}
\caption{\textbf{Bias Estimation.} Bias values above a threshold of 5\% are considered significant.}
\label{estimates}
\end{table}

\begin{figure}[!htb] 
\centering
\includegraphics[width = 0.95\columnwidth]{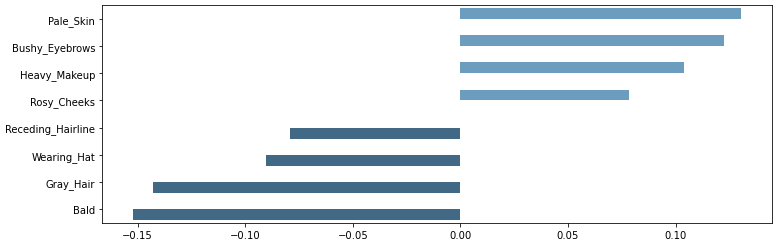}
\caption{\textbf{Explaining a Classifier.} Attribute ranking of top 4 positive and top 4 negative influential attributes.}
\label{attrrank}
\end{figure}

We also use CFs to explain the classifier's decisions for predicting attractiveness of a face. Fig \ref{attrrank} shows the top 4 positive and top 4 negative influences for classifying a face as attractive. We can see that \textit{Pale} skin is the top attribute that contributes to the classifier predicting a face as attractive, while \textit{Bald} is the top attribute that contributes to the classifier prediction of not attractive.

\subsection{Bias Mitigation for a Classifier}
Finally, using Eqn.~\ref{eq:bias-mitigation}, we employ the generated CFs to remove the identified biases in the \textit{Attractive} classifier on $do$ (black = 1, pale = 1), $do$ (blond = 1, pale = 1) and $do$ (brown = 1, pale = 1) . 
The CF-regularized classifier reports a bias score of 0.032 for black hair and pale (against 0.159 for the original classifier) and 0.012 for brown hair and pale (against 0.154 for the original classifier). Also, the reduced biases are no longer significant, without reducing the accuracy (82.3\% versus 82.6\%). Details are in Appendix N. 

Additionally, our method can also be utilized in the setting where the attributes are connected in a complex causal graph structure, unlike \cite{denton2019detecting, joo2020gender}. We conduct a similar fairness and explanation analysis for a \textit{Attractive} classifier in Appendix O, where \textit{Young} affects other visible attributes like \textit{Gray hair}.

\section{Conclusion}

We propose a framework \ourmethod\ for generating counterfactuals based on an underlying SCM, utilizing the  generative capacity of GANs. 
We demonstrate how the counterfactuals can be used to evaluate and mitigate a classifier's biases and explain its decisions.  
That said, we acknowledge two limitations , 1) Our CF generation method relies on accurate knowledge of the causal graph; 2) It uses a statistical model that can have have unknown failure modes in generating meaningful counterfactuals.
Therefore, this work should be considered as a prototype early work in generating counterfactuals, and is not suitable for deployment.

\bibliography{ref.bib}

\clearpage
\appendix

\noindent In this appendix, we discuss the following details, which could not be included in the main paper owing to space constraints.

\section{\ourmethod Algorithm}

As described in Section~\ref{subsec_scm_over_attribs}, our method involves two components: learning the Attribute-SCM ($\mathcal{M}_a$) that models relationships between attributes,  and learning the generator ($\mathcal{M}_x$) that produces counterfactual images given the attributes.

\subsection{Learning the Attribute-SCM}

We assume that the causal graph structure is provided by the user, but the causal functions relating the attributes are not available. For each attribute $a_i$, the graph entails a set of attributes $P_{a_i}$ that cause the attribute (its \textit{parents} in the graph structure) that should be included while estimating the value of the attribute. It leads to the following generating equation for each attribute, 
\begin{equation}
    a_i = g_i(P_{a_i},\epsilon_i)
\end{equation}
where $P_{a_i}$ are the parents of attribute $a_i$ and $\epsilon_i$ is independent noise. The goal is to learn the unknown function $g_i$ for each attribute. Thus, given $n$ attributes, we obtain a set of $n$ equations, each of which can be independently estimated using Maximum Likelihood Estimation from Bayesian networks~\cite{ding2010probabilistic}. 

 \begin{algorithm}[h]
\small
\SetAlgoLined
\KwIn{Input data-attribute pair ($\vecx^*$, $\veca^*$); Indices of sensitive attributes to modify $K$; Modified attribute values $\veca^{**}_K$, Attribute SCM $\mathcal{M}_a = \{g_i(P_{a^*_i}, \hat{\epsilon}_i)\} \forall a^*_i \in \veca^*, P_{a^*_i} \subseteq \veca^*$ where $P_{a^*_i}$ denotes parents of $a^*_i$ and $\hat{\epsilon}_i$ is estimated from data, Encoder $E$, Generator $G$}
\KwOut{$\vecx_c$}
%\tcc{Step 1: Abduction}\\
{\bf{Step 1: Abduction}}\\ 
$\vecz = E(\vecx^*, \veca^*)$\\
{\bf{Step 2: Action}}\\ 
Remove all incoming arrows to $\veca_K$ in $\mathcal{M}_a$ to yield $\mathcal{M}_a'$ \\
$a_i \rightarrow a^{**}_i \quad  \forall i \in K, a^{**}_i \in \veca^{**}_K$ \tcc{Intervene on sensitive attributes and set to modified attribute values}
%\tcc{Step 3: Prediction}\\
{\bf{Step 3: Prediction}}\\ 
$D = desc(\veca_K)$ \tcc{descendants of $\veca_K$}
$currset = children(\veca_K)$ \\
$changed[\textbf{a}^*] = False$ \\
\While{currset is not empty}{
    $cs=\emptyset$; \\
    \ForEach{$a_j \in currset$}{
        %\If{not $(any_l(P_{a_j} \in desc(\veca_K)$ \& $changed[Pa_l(a_j)]=False)$)}{
        \tcc{Proceed only if the values of all parents of $a_j$ are already changed}
        \If{not $any_l(P^{(l)}_{a_j} \in D$ \& $changed[P^{(l)}_{a_j}]==False)$)}{
%\ForEach{descendant of $a_i$ $desc(a_i)$}{
%$desc(a_i) \rightarrow g_i( p_{desc(a_i)}, \hat{\epsilon_i}) \quad \forall a_i \in \veca_k $
        $a^{**}_j \rightarrow g_j(P_{a_j^{**}}, \hat{\epsilon_j})$\\
        $cs.append(children(a_j))$\\
        $changed[a_j] = True$
        }
    }
    $currset=cs$ \tcc{If there exist children of the current nodes, repeat while loop}
} 
$\veca_c = \veca^{**}_K \cup \veca^{**}_{-K}$ \tcc{$\veca^{**}_{-K}$ denotes the final values for all non-sensitive attributes}
$\vecx_c = G(\vecz, \veca_c)$
\caption{\ourmethod} 
\label{algorithm1}
\end{algorithm}

\subsection{Generating Counterfactuals}
Algorithm \ref{algorithm1} below shows how \ourmethod\ is implemented, in extension to the discussion in Sec 3.2 of the main paper. Here we assume that the Attribute-SCM, the Encoder $E$, and Generator $G$ are pre-learnt and provided as input to the algorithm. 
$(\vecx^*,\veca^*)$ denotes the input value of the image and the attributes. Note that the Attribute SCM operations (Steps 1-3) are based on the formal procedure to generate counterfactuals from ~\cite{pearl2009book}.  

In Step 1 (Abduction), the encoder uses the input data $(\vecx^*,\veca^*)$ to create the latent vector $\vecz$.

In Step 2 (Action), given the indices $K$ of sensitive attributes, the goal is to remove the  dependence of these sensitive attributes on any other attribute and then change their value to the desired value. To do so, the causal graph of Attribute-SCM is modified to  remove all incoming arrows to $\veca_K$. That is, the structural equations for each $a_i \in \veca_K$ changes from $a_i = g_i(P_{a_i}, \epsilon_i)$ to $a_i = \epsilon'_i$, yielding a modified Attribute-SCM $\mathcal{M}'_a$. Then each $a_i \in \veca_K$ is set to the desired modified value $a_i^{**}$ (Line 5; we do this without changing its parents or other variables, hence it is called an \textit{intervention}).

In Step 3 (Prediction), we propagate the change in sensitive attributes to all attributes caused by them. That is, values of all descendants of sensitive attributes $\veca_K$ in the causal graph of Attribute-SCM $\mathcal{M}'_a$ are changed, based on the modified value of the sensitive attributes from Step 2. We do so level-by-level: first changing the values of children of $\veca_K$ based on the modified value of $\veca_K$, then the values of children of children of $\veca_K$, and so on. Note that line 13 ensures that an attribute's value is changed only when all its parents' values have been modified. The corresponding equation for each descendant $a_j$ is given in Line 14,
\begin{equation}
    a^{**}_j \rightarrow g_j(P_{a^{**}_j}, \hat{\epsilon_j}) 
\end{equation}
where $P_{a^{**}_j}$ refers to the modified values of the parents of $a_j$, $g_j$ is the pre-learnt Attribute-SCM function,  and $\hat{\epsilon_j}$ is the estimated error term from the original attribute data (i.e., it is the same error value that satisfies  $a^{*}_j \rightarrow g_j(P_{a^{*}_j}, \hat{\epsilon_j})$). Finally, the updated values of all attributes are used to create $\veca_c$, the counterfactual attribute vector. This counterfactual attribute vector is then combined with latent $z$ from Step 1 and provided as input to the generator G, to obtain the counterfactual image $\vecx_c$ (Line 22).

\section{Counterfactual Generation Proof}
\begin{restatable}{proposition}{cfproof} \label{thm:valid-cf}
    Assume that the true SCM $\mathcal{M}$ belongs to a class of SCMs where the structural equations can be identified uniquely given the causal graph. Further, assume that the Encoder $E$, Generator $G$ and Attribute SCM are optimal and thus  correspond to the true structural equations of the SCM $\mathcal{M}$. Then Equation~\ref{counterfactual} generates a valid counterfactual image for any given input $(\vecx, \veca)$ and the requested modified attributes $\veca'_k$.
\end{restatable}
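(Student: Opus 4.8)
The plan is to show that the $\vecx_c$ produced by Equation~\ref{counterfactual} is exactly the image returned by Pearl's three-step counterfactual procedure --- abduction, action, prediction~\cite{pearl2009book} --- applied to the true SCM $\mathcal{M}$. First I would fix notation: write the exogenous variables of $\mathcal{M}$ as $(\{\epsilon_i\}_{i=1}^{n}, \vecz)$, where $\epsilon_i$ drives the attribute equation $a_i = g_i(P_{a_i},\epsilon_i)$ and $\vecz$ is the exogenous noise of the image mechanism $\vecx = G(\vecz,\veca)$; note that $\vecx$ is a sink in the causal graph, so its mechanism has no downstream effect. Using the hypothesis that $\mathcal{M}$ lies in an identifiable class (e.g.\ additive-noise or linear-Gaussian models), each $g_i$ is invertible in its noise argument, and the optimality hypothesis on $E$ and $G$ gives the inversion properties $G(E(\vecx,\veca),\veca)=\vecx$ and $E(G(\vecz,\veca),\veca)=\vecz$, so that $E$ recovers the exogenous latent of the image mechanism given the attributes, and the Attribute-SCM mechanisms equal the true $g_i$.

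Next I would match the three steps one by one. \textbf{Abduction:} given the evidence $(\vecx,\veca)$, the posterior over exogenous noise is a point mass --- each attribute noise is recovered as $\hat\epsilon_i = g_i^{-1}(a_i; P_{a_i})$, precisely the value Algorithm~\ref{algorithm1} stores and reuses, and the image noise is recovered as $\vecz = E(\vecx,\veca)$ by the inversion property (this is why the ALI reconstruction of Equation~\ref{reconstructed} is the certificate of a correct abduction). \textbf{Action:} intervening on $\veca_k$ replaces $a_i = g_i(P_{a_i},\epsilon_i)$ by $a_i = a_i'$ for $i$ in the intervened set and deletes the corresponding incoming edges (Step~2 of the algorithm); since every attribute is an ancestor of $\vecx$, the image mechanism is structurally unchanged. \textbf{Prediction:} evaluating the mutilated SCM at the abducted noise, non-descendants of $\veca_k$ keep their observed values, each descendant is recomputed in topological order as $g_j$ of its new parent values and $\hat\epsilon_j$ (Step~3), yielding $\veca_c$, and finally the image is recomputed through its unchanged mechanism as $G(\vecz,\veca_c)$. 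Composing the three steps gives $\vecx_c = G(E(\vecx,\veca),\veca_c)$, which is exactly Equation~\ref{counterfactual}.

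I expect the main obstacle to be justifying the abduction step for the \emph{image} mechanism: that the optimal encoder recovers the true exogenous noise of the true image-generating mechanism, rather than merely some latent code that happens to reconstruct $\vecx$. Making this rigorous requires (i) that the true $\mathcal{M}_x$ can be expressed as a deterministic map $G^\star(\vecz,\veca)$ with $\vecz$ of the assumed prior form, and (ii) that $G^\star$ admits a left inverse in $\vecz$ given $\veca$; the optimality assumption on $(E,G)$ is exactly what we lean on here, together with the convention that counterfactual queries are resolved with a single abducted noise value, so that a point-valued $\vecx_c$ is the intended answer. A secondary, more routine point is checking that the level-by-level update in Algorithm~\ref{algorithm1} (in particular the guard that an attribute is updated only once all its parents have been) visits descendants in a valid topological order and hence computes the same solution as the mutilated structural equations; I would dispatch this by induction on depth in the causal graph.
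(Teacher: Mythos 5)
Your proposal is correct and follows essentially the same route as the paper's proof: both apply Pearl's abduction--action--prediction procedure to the true SCM $\mathcal{M}=\{\mathcal{M}_x,\mathcal{M}_a\}$, identify the abducted image noise with $E(\vecx,\veca)$ and the image mechanism with $G$ under the stated optimality assumptions, and conclude that the composition reproduces Equation~\ref{counterfactual}. Your explicit flagging of the weak point --- that ``optimality'' must mean the encoder recovers the \emph{true} exogenous noise of $\mathcal{M}_x$ rather than any latent that merely reconstructs $\vecx$ --- and your remark on the topological-order guard in Algorithm~\ref{algorithm1} are both sound observations that the paper handles only implicitly by assumption, but they do not change the structure of the argument.
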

\begin{proof}
    Let $\mathcal{M} = \{\mathcal{M}_x, \mathcal{M}_a\}$ be the true SCM  that generates the data $(\vecx, \veca)$. Let $\veca_k \subset \veca$ be the attributes you want to intervene on. Let $\veca_{-k} = \veca \setminus \veca_k$ be the remaining attributes. The corresponding equations for $\mathcal{M}$ are:
    \begin{equation} \label{eq:scm}
        \begin{split}
        a_{i} &:= g_i ( p_{a_i}, \epsilon_i) ,\quad \forall i = 1..n\\
        \vecx &:= g(\veca_k, \veca_{-k}, \epsilon)
        \end{split}
    \end{equation}
    where $\epsilon$ and $\epsilon_i$ refer to independent noise, g and $g_i$ refer to structural assignments of the SCMs $\mathcal{M}_x$ and $\mathcal{M}_a$ respectively while $p_{a_i}$ refers to parent attributes of $a_i$. 
    Given an input $(\vecx, a_k, a_{-k})$, we generate a counterfactual using $\mathcal{M}$ and show that it is equivalent to Equation~\ref{counterfactual}.

\noindent \textbf{Abduction:} Infer $\epsilon$ for $\vecx$ and $\epsilon_i$ for all $a_i$ from Equation~\ref{eq:scm} using the following equations:
    \begin{equation}
        \begin{split}
        \hat{\epsilon_i} &:= g_i^{-1}(a_i, p_{a_i}) ,\quad \forall i = 1..n \\
            \hat{\epsilon} &:= g^{-1}(\vecx, \veca_k, \veca_{-k})
 \end{split}
    \end{equation}
    
\noindent \textbf{Action:} Intervene on all attributes $\veca_k$ by setting them to the requested modified attributes $\veca'_k$.

    \begin{equation}
        a_i \rightarrow a'_i \quad \forall a_i \in \veca_k \quad \textrm{and} \quad \forall a'_i \in \veca'_k
    \end{equation}
    
\noindent \textbf{Prediction:}
    The following equation then changes the values of all descendants of $\veca_k$.
%{-5pt}
    \begin{equation} \label{prediction}
      desc(a_i) \rightarrow g_i( p_{desc(a_i)}, \hat{\epsilon_i}) \quad \forall a_i \in \veca_k 
    \end{equation}
where $desc(a_i)$ are descendants of $a_i$ and $p_{desc(a_i)}$ are parents of the descendants of $a_i$, $\forall a_i \in \veca_k$. Let $\veca_c = \veca'_k \cup \veca'_{-k}$ where $\veca'_{-k}$ are (possibly) modified values of the other attributes according to Equation~\ref{prediction}. 
Therefore, the counterfactual of $\vecx$, $\vecx_c$ can be generated as:
%{-5pt}
\begin{equation}
    \vecx_c := g(\veca_c, \hat{\epsilon})
\end{equation}

We now show that Equation~\ref{counterfactual} produces the same $\vecx_c$. By the assumption in the theorem statement, the Attribute-SCM corresponds to the structural assignments $\{g_i, g_i^{-1}\}$, $\forall i = 1..n$ of SCM $\mathcal{M}_a$ while the Generator $G$ learns the structural assignment $g$ and the Encoder $E$ learns $g^{-1}$ of the SCM $\mathcal{M}_x$. Hence, the Attribute-SCM, Generator and Encoder learn the combined SCM $\mathcal{M}$.

When the SCM assignments learned by the Attribute-SCM are optimal, i.e. Attribute-SCM = $\mathcal{M}_a$ then:
%{-5pt}
$$ \hat{a}_{c} = a_{c} $$
Similarly, under optimal Generator, $G = g$ and $E= g^{-1}$:
%{-5pt}
\begin{equation}
    \begin{split}
        \vecx_c &= g(\veca_c, g^{-1}(\vecx, \veca_k, \veca_{-k})) \\
        &= G(\veca_c, E(\vecx, \veca)) \quad (\textrm{as} \quad \veca = \veca_k \cup \veca_{-k}) \\
\end{split}
\end{equation}
which is the same as Equation~\ref{counterfactual}.
\end{proof}

\section{Counterfactual Fairness Proof}

\begin{restatable}{definition}{cfdef}{Counterfactual Fairness from ~\cite{kusner2017counterfactual}.} 
    Let $\mathcal A$ be the set of attributes, comprising of sensitive attributes $\mathcal A_S \subseteq \mathcal A$ and other non-sensitive attributes $\mathcal A_N$. The classifier $\hat{f}$ is counterfactually fair if under any context $\vecX=\vecx$ and $\vecA=\veca$, changing the value of the sensitive features to $\vecA_S \leftarrow a_s'$ counterfactually does not change the classifier's output distribution $Y$. %and for any subset of $\mathcal A_S$,
    %{-3pt}
    \begin{equation}\begin{medsize}
        \begin{aligned}
        P(Y_{A_S\leftarrow a_s}=y|\vecX=\vecx, \vecA_S=\veca_s, \vecA_N=\veca_N) \\
        =P(Y_{A_S\leftarrow a'_s}=y|\vecX=\vecx, \vecA_S=\veca_s, \vecA_N=\veca_N) 
    \end{aligned}
    \end{medsize}
    \end{equation}
    %{-4pt}
    for all $y$ and for any value $\veca_s'$ attainable by $\vecA_S$.
\end{restatable}

\begin{restatable}{proposition}{cfgenfair}
    Under the assumptions of Proposition~\ref{thm:valid-cf} for the  encoder $E$, generator $G$, and Attribute SCM $\mathcal{M}_a$, a classifier $\hat{f}(\vecX):\mathcal{X} \to \mathcal{Y}$ that satisfies zero bias according to Equation \ref{bias} is counterfactually fair with respect to $\mathcal{M}$. 
\end{restatable}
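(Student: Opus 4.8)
The plan is to connect the empirical bias metric of Equation~\ref{bias} to the potential‑outcome quantities appearing in the definition of counterfactual fairness, using Proposition~\ref{thm:valid-cf} as the bridge. Fix a context $\vecX=\vecx$, $\vecA_S=\veca_s$, $\vecA_N=\veca_N$ and a target sensitive value $\veca_s'$. First I would apply Proposition~\ref{thm:valid-cf} twice with the \emph{same} abducted latent $\vecz = E(\vecx,\veca)$: once with the trivial ``intervention'' that leaves the sensitive attributes at their observed value (which, since abduction recovers the exact noise and no structural equation is overridden, yields $\vecx_r = G(E(\vecx,\veca),\veca) = \vecx$ under the optimality hypotheses), and once with the requested intervention $\vecA_S \leftarrow \veca_s'$ (which yields $\vecx_c$, the true counterfactual image $\vecx_{\vecA_S \leftarrow \veca_s'}$). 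Because $\hat f$ is a deterministic function of the image, this gives $y_r = \hat f(\vecx_r) = \hat f(\vecx) = Y_{\vecA_S\leftarrow \veca_s}$ and $y_c = \hat f(\vecx_c) = Y_{\vecA_S\leftarrow \veca_s'}$; moreover both are read off the same unit, so the joint law of $(y_r,y_c)$ over the test set equals the joint law of $\bigl(Y_{\vecA_S\leftarrow \veca_s},\,Y_{\vecA_S\leftarrow \veca_s'}\bigr)$ over units. This is the step that genuinely uses the optimality assumptions inherited from Proposition~\ref{thm:valid-cf}.

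Next I would rewrite the bias. Using the simplified form in Equation~\ref{bias-simple} and $\sum_{y_r,y_c} p(y_r,y_c)=1$ for a binary classifier,
\begin{equation}
\text{bias} = p(y_r=0,y_c=1) - p(y_r=1,y_c=0) = p(y_c=1) - p(y_r=1) .
\end{equation}
Hence $\text{bias}=0$ is equivalent to $p(Y_{\vecA_S\leftarrow \veca_s'}=y)=p(Y_{\vecA_S\leftarrow \veca_s}=y)$ for every $y$, i.e.\ the counterfactual intervention on the sensitive attribute leaves the induced distribution of $Y$ unchanged. Reading the counterfactual‑fairness requirement at the level of this output distribution — the level at which the bias metric is defined — this is exactly counterfactual fairness with respect to $\mathcal{M}$. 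Repeating the argument for every attainable $\veca_s'$ (and, for the conditional form of the Kusner et al.\ definition, within each sub‑population obtained by fixing the context) finishes the proof.

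The delicate step — and what I expect to be the main obstacle — is the passage from this distribution‑level conclusion to the pointwise form of the definition: $\text{bias}=0$ only forces $p(y_r=0,y_c=1)=p(y_r=1,y_c=0)$, so a classifier with ``balanced'' label flips (as many $0\!\to\!1$ as $1\!\to\!0$ across the population) has zero bias yet need not satisfy $y_r=y_c$ for every individual. I would close this gap in one of two ways: either by invoking the hypothesis at finer granularity — requiring $\text{bias}=0$ for the population conditioned on each full context $(\vecx,\veca)$, in which case each joint probability degenerates to an indicator, balanced flips are impossible, and $y_r=y_c$ holds pointwise — or by explicitly adopting the output‑distribution reading of counterfactual fairness that the bias metric is built to test. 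Everything else (the double application of Proposition~\ref{thm:valid-cf}, determinism of $\hat f$, and the Bayes‑rule simplification already carried out in the main text) is routine.
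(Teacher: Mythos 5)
Your proposal follows essentially the same route as the paper's proof: apply Proposition~\ref{thm:valid-cf} to identify $y_c=\hat f(\vecx_c)$ with the counterfactual output $Y_{\vecA_S\leftarrow \veca_s'}$, simplify the bias to $p(y_c=1)-p(y_r=1)$, and match this difference to the two sides of the counterfactual-fairness equation. The ``delicate step'' you flag --- that zero \emph{aggregate} bias only constrains the marginal distributions while the Kusner et al.\ definition is stated per context $(\vecx,\veca)$ --- is real, but the paper's proof glosses over it in exactly the same way (it writes the probabilities conditioned on a single context while the bias is computed over the test population), so your more careful per-context reading is a refinement rather than a different argument.
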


\begin{proof}
To evaluate fairness, we need to reason about the output $Y = y$ of the classifier $\hat{f}$. Therefore, we add a functional equation to the SCM $\mathcal{M}$ from Equation~\ref{eq:scm}.
%{-3pt}
 \begin{equation} 
        \begin{split}
        a_{i} &:= g_i ( p_{a_i}, \epsilon_i) ,\quad \forall i = 1..n\\
        \vecx &:= g(\veca_k, \veca_{-k}, \epsilon) \\
        y & \leftarrow \hat{f}(\vecx)
        \end{split}
    \end{equation}
    %{-2pt}
    % \begin{equation} \label{eq:x-eqn}\begin{medsize}
    %     \begin{aligned}
    %         \vecX & \leftarrow g(\vecA, \epsilon) \\
    %         A_{i} & \leftarrow g_i (P_{\vecA_i}, \epsilon_i)\\
    %         \hat{Y} & \leftarrow \hat{f}(\vecX)
    %     \end{aligned}
    %     \end{medsize}
    % \end{equation}
    % \amit{the above equation needs to be identical to Equation 4 from thm 1 proof. include $a_k$.}
    where $\epsilon$ and $\epsilon_i$ are independent errors as defined in Equation~\ref{eq:scm}, and $P_{\veca_i}$ refers to the parent attributes of an attribute $a_i$ as per the Attribute-SCM. The SCM equation for $y$ does not include an noise term since the ML classifier $\hat{f}$ is a deterministic function of $\vecX$. %In addition, each of the attributes are generated by the following equations where $P_{\vecA_i}$ refers to parent attributes of attribute $\vecA_i$.
    In the above equations, the attributes $\veca$ are separated into two subsets: $\veca_k$ are the attributes specified to be changed in a counterfactual and $\veca_{-k}$ refers to all other attributes.
    
    Based on this SCM, we now generate a counterfactual $y_{a_k\leftarrow a'_k}$ for an arbitrary new value $a'_{k}$. Using the \textbf{Prediction} step, the counterfactual output label $y_c$ for an input $(\vecx, \veca)$ is given by:  $y_c = \hat{f}(\vecX= \vecx_c)$.  From Theorem 1, under optimality of the encoder $E$, generator $G$ and learned functions $g_i$, we know that $\vecx_c$ generated by the following equation is a valid counterfactual for an input $(\vecx, \veca)$,
    \begin{equation}\begin{medsize}
    \begin{aligned}
         \vecx_c &= G(E(\vecx, \veca), \veca_c)  \\
         &= X_{\vecA_k\leftarrow a_k'}|(\vecX=\vecx, \vecA_k=\veca_k, \vecA_{-k} = \veca_{-k})
         \end{aligned}
        \end{medsize}
    \end{equation}
where $\veca_c$ represents the modified values of the attributes under the action $\vecA_k\leftarrow a_k'$. 
Therefore, $y_{a_k\leftarrow a'_k}|(\vecX=\vecx, \vecA_k=\veca_k, \vecA_{-k} = \veca_{-k})$ is given by $y_c=\hat{f}(\vecx_c)$.
%\amit{correct the notations for $A_S$}

Using the above result, we now show that the bias term from Equation~\ref{bias} and \ref{bias-simple} reduces to the counterfactual fairness definition from Definition 1, %Using 1),2), 3), the closed form expression for the counterfactual is:
\begin{equation}
\begin{medsize}
\begin{aligned}
 P&(y_r = 0, y_c = 1) - P(y_r = 1, y_c = 0)\\
    %P(y_c=1) - P(y_r=1) \\
    &= [P(y_r = 0, y_c = 1) + P(y_r = 1, y_c = 1)] - \\
    & \text{ \ \ \ \ } [P(y_r = 1, y_c = 0) + (P(y_r =1, y_c = 1)] \\
    &= P(y_c=1) - P(y_r=1)\\
    &= [P(Y_{A_k\leftarrow a'_k}=1|\vecX=\vecx, \vecA_k=\veca_k, \vecA_{-k} = \veca_{-k}) -\\
     & \text{ \ \ \ \ } P(Y_{A_k\leftarrow a_k}=1|\vecX=\vecx, \vecA_k=\veca_k, \vecA_{-k} = \veca_{-k})] \\
     &= [ P(Y_{A_S\leftarrow a'_s}=1|\vecX=\vecx, \vecA_S=\veca_s, \vecA_N = \veca_n) -\\
     & \text{ \ \ \ \ } P(Y_{A_S\leftarrow a_s}=1|\vecX=\vecx, \vecA_S=\veca_s, \vecA_N = \veca_n)]
      \end{aligned}
    \end{medsize}
\end{equation}
where the second equality is since $y_r, y_c \in \{0,1\}$, the third equality is since the reconstructed $y_r$ is the output prediction when $\vecA=\veca$, and the last equality is $\vecA_k$ being replaced by the sensitive attributes $\vecA_S$ and $\vecA_{-k}$ being replaced by $\vecA_N$. We can prove a similar result for $y_c=0$. Hence, when bias term is zero, the ML model $\hat{f}$ satisfies counterfactual fairness (Definition 1). 
\end{proof}

%%\vspace{-2pt}
\section{Architecture Details}

Here we provide the architecture details for the base Adversarially Learned Inference (ALI) model trained  on Morpho-MNIST and CelebA datasets. For Cyclic Style ALI, we replace the ALI generator with the style-based generator architecture in \cite{karras2019style}. We however do not use progressive growing or other regularization strategies suggested in \cite{karras2019style,karras2020analyzing} while training our model.
For details on the style-based generation architecture, please refer \cite{karras2019style}.

Overall, the architectures and hyperparameters are similar to the ones used by \cite{dumoulin2016adversarially}, with minor variations.
%\amit{Not true, now we use style gan} 
Instead of using the Embedding network from the original paper, the attributes are directly passed on to the Encoder, Generator and Discriminator. We found that this enabled better conditional generation in our experiments. All experiments were implemented using  Keras 2.3.0 \cite{chollet2015keras} with Tensorflow 1.14.0 \cite{abadi2016tensorflow}. All models were trained using a Nvidia Tesla P100 GPU. 

%%\vspace{-5pt}
\subsection{Morpho-MNIST}
Tables \ref{MGen}, \ref{MEnc} and \ref{MDisc} show the Generator, Encoder and Discriminator architectures respectively for generating Morpho-MNIST counterfactuals. Conv2D refers to Convolution 2D layers, Conv2DT refers to Transpose Convoloution layers, F refers to number of filters, K refers to kernel width and height, S refers to strides, BN refers to Batch Normalization, D refers to dropout probability and A refers to the activation function. LReLU denotes the Leaky ReLU activation function. 
We use the Adam optimizer~\cite{kingma2014adam} with a learning rate of $10^{-4}$, $\beta_1 = 0.5$ and a batch size of 100 for training the model. For the LeakyReLU activations, $\alpha = 0.1$. The model converges in approximately 30k iterations. All weights are initialized using the Keras truncated normal initializer with $mean = 0.0$ and $stddev = 0.01$. All biases are initialized with zeros. 

\begin{table}[!htb]
\centering

\begin{tabular}{lllllll}
\hline
Layer   & F   & K     & S   & BN & D   & A \\
\hline \\
Conv2DT & 256 & (4,4) & (1,1) & Y  & 0.0 & LReLU  \\
Conv2DT & 128 & (4,4) & (2,2) & Y  & 0.0 & LReLU  \\
Conv2DT & 64 & (4,4) & (1,1) & Y  & 0.0 & LReLU  \\
Conv2DT & 32 & (4,4) & (2,2) & Y  & 0.0 & LReLU  \\
Conv2DT & 32  & (1,1) & (1,1) & Y  & 0.0 & LReLU  \\
Conv2D & 1   & (1,1) & (1,1) & Y  & 0.0 & Sigmoid \\
\hline
\end{tabular}
\caption{\textbf{Architecture for Morpho-MNIST Generator}}
\label{MGen}
\end{table}

\begin{table}[!htb]
\centering

\begin{tabular}{lllllll}
\hline
Layer  & F   & K     & S     & BN & D   & A \\
\hline
Conv2D & 32  & (5,5) & (1,1) & Y  & 0.0 & LReLU  \\
Conv2D & 64 & (4,4) & (2,2) & Y  & 0.0 & LReLU  \\
Conv2D & 128 & (4,4) & (1,1) & Y  & 0.0 & LReLU  \\
Conv2D & 256 & (4,4) & (2,2) & Y  & 0.0 & LReLU  \\
Conv2D & 512 & (3,3) & (1,1) & Y  & 0.0 & LReLU  \\
Conv2D & 512 & (1,1) & (1,1) & Y  & 0.0 & Linear  \\
\hline  
\end{tabular}
\caption{\textbf{Architecture for Morpho-MNIST Encoder}}
\label{MEnc}
\end{table}

\begin{table}[!htb]
\centering

\begin{tabular}{lllllll}
\hline
Layer   & F    & K     & S     & BN & D   & A \\
\hline
$D_z$    &      &       &       &    &     &            \\
Conv2D  & 512 & (1,1) & (1,1) & N  & 0.2 & LReLU  \\
Conv2D  & 512 & (1,1) & (1,1) & N  & 0.5 & LReLU  \\
$D_x$    &      &       &       &    &     &            \\
Conv2D  & 32   & (5,5) & (1,1) & N  & 0.2 & LReLU  \\
Conv2D  & 64  & (4,4) & (2,2) & Y  & 0.2 & LReLU  \\
Conv2D  & 128  & (4,4) & (1,1) & Y  & 0.5 & LReLU  \\
Conv2D  & 256  & (4,4) & (2,2) & Y  & 0.5 & LReLU  \\
Conv2D  & 512  & (3,3) & (1,1) & Y  & 0.5 & LReLU  \\
$D_{x\_z}$ &      &       &       &    &     &            \\
Conv2D  & 1024 & (1,1) & (1,1) & N  & 0.2 & LReLU  \\
Conv2D  & 1024 & (1,1) & (1,1) & N  & 0.2 & LReLU  \\
Conv2D  & 1    & (1,1) & (1,1) & N  & 0.5 & Sigmoid\\
\hline   
\end{tabular}
\caption{\textbf{Architecture for Morpho-MNIST Discriminator.} $D_x$, $D_z$ and $D_{xz}$ are the discriminator components to process the image $x$, the latent variable $z$ and the output of $D_x$ and $D_z$ concatenated, respectively.}
\label{MDisc}
\end{table}
% \amit{what is dz, dx, dxz? need to explain}

%%\vspace{-4pt}
\subsection{CelebA}
%%\vspace{-5pt}
Tables \ref{CGen}, \ref{CEnc} and \ref{CDisc} show the Generator, Encoder and Discriminator architectures respectively for generating CelebA counterfactuals.
\begin{table}[!htb]
\centering

\begin{tabular}{lllllll}
\hline
Layer   & F   & K     & S   & BN & D   & A \\
\hline \\
Conv2DT & 512 & (4,4) & (1,1) & Y  & 0.0 & LReLU  \\
Conv2DT & 256 & (7,7) & (2,2) & Y  & 0.0 & LReLU  \\
Conv2DT & 256 & (5,5) & (2,2) & Y  & 0.0 & LReLU  \\
Conv2DT & 128 & (7,7) & (2,2) & Y  & 0.0 & LReLU  \\
Conv2DT & 64  & (2,2) & (1,1) & Y  & 0.0 & LReLU  \\
Conv2D & 3   & (1,1) & (1,1) & Y  & 0.0 & Sigmoid\\
\hline 
\end{tabular}
\caption{\textbf{Architecture for CelebA Generator}}
\label{CGen}
\end{table}

\begin{table}[!htb]
\centering

\begin{tabular}{lllllll}
\hline
Layer  & F   & K     & S     & BN & D   & A \\
\hline
Conv2D & 64  & (2,2) & (1,1) & Y  & 0.0 & LReLU  \\
Conv2D & 128 & (7,7) & (2,2) & Y  & 0.0 & LReLU  \\
Conv2D & 256 & (5,5) & (2,2) & Y  & 0.0 & LReLU  \\
Conv2D & 256 & (7,7) & (2,2) & Y  & 0.0 & LReLU  \\
Conv2D & 512 & (4,4) & (1,1) & Y  & 0.0 & LReLU  \\
Conv2D & 512 & (1,1) & (1,1) & Y  & 0.0 & Linear \\
\hline   
\end{tabular}
\caption{\textbf{Architecture for CelebA Encoder}}
\label{CEnc}
\end{table}

\begin{table}[!htb]
\centering

\begin{tabular}{lllllll}
\hline
Layer   & F    & K     & S     & BN & D   & A \\
\hline
$D_z$    &      &       &       &    &     &            \\
Conv2D  & 1024 & (1,1) & (1,1) & N  & 0.2 & LReLU  \\
Conv2D  & 1024 & (1,1) & (1,1) & N  & 0.2 & LReLU  \\
$D_x$    &      &       &       &    &     &            \\
Conv2D  & 64   & (2,2) & (1,1) & N  & 0.2 & LReLU  \\
Conv2D  & 128  & (7,7) & (2,2) & Y  & 0.2 & LReLU  \\
Conv2D  & 256  & (5,5) & (2,2) & Y  & 0.2 & LReLU  \\
Conv2D  & 256  & (7,7) & (2,2) & Y  & 0.2 & LReLU  \\
Conv2D  & 512  & (4,4) & (1,1) & Y  & 0.2 & LReLU  \\
$D_{x\_z}$ &      &       &       &    &     &            \\
Conv2D  & 2048 & (1,1) & (1,1) & N  & 0.2 & LReLU  \\
Conv2D  & 2048 & (1,1) & (1,1) & N  & 0.2 & LReLU  \\
Conv2D  & 1    & (1,1) & (1,1) & N  & 0.2 & Sigmoid\\
\hline   
\end{tabular}
\caption{\textbf{Architecture for CelebA Discriminator.} $D_x$, $D_z$ and $D_{xz}$ are the discriminator components to process the image $x$, the latent variable $z$ and the output of $D_x$ and $D_z$ concatenated, respectively.}
\label{CDisc}
\end{table}
%We use the Adam optimizer~\cite{kingma2014adam}  with a learning rate of $10^{-4}$, $\beta_1 = 0.5$ and a batch size of 128 for training the model. For the LeakyReLU activations, $\alpha = 0.02$. The model converges in approximately 30k iterations. All weights are initialised using the keras truncated normal initializer with $mean = 0.0$ and $stddev = 0.01$. All biases are initialized with zeros.

% \section{Cyclic Style ALI}
% We however do not use progressive growing or other regularization strategies suggested in \cite{karras2019style,karras2020analyzing} while training our model.
% \section{CelebA Causal Graph}
% Figure \ref{celebagraph} shows the causal graph used for fairness and explainability experiments in the main paper. As explained previously, we explicitly leave out attributes like \textit{Male} and \textit{Young} that have different societal notions surrounding them~\cite{denton2019detecting} and consequently do not have a clear causal graph associated with them.
% The attribute \textit{Attractive} is also left out as it used in the downstream classifier for fairness analysis. The remaining attributes are assumed to be independent of each other and modelled as a star graph showed in Figure \ref{celebagraph}.  
% % \amit{is attractive used? if not, good to mention here}
% \begin{figure}
% \centering
%   \includegraphics[width = 0.9\columnwidth]{figures/celeba_graph.jpg}
%   \caption{Causal graph for CelebA.}
%   \label{celebagraph}
% \end{figure}

%%%\vspace{-5pt}
\section{Morpho-MNIST Latent Space Interpolations}
%%\vspace{-5pt}
We also plot latent space interpolations between pairs of images sampled from the test set. Figure \ref{mnistint} shows that the model has learned meaningful latent space representations and the transitional images look realistic as well. 

\begin{figure}[!htb]
\centering
\includegraphics[width = 0.9\columnwidth]{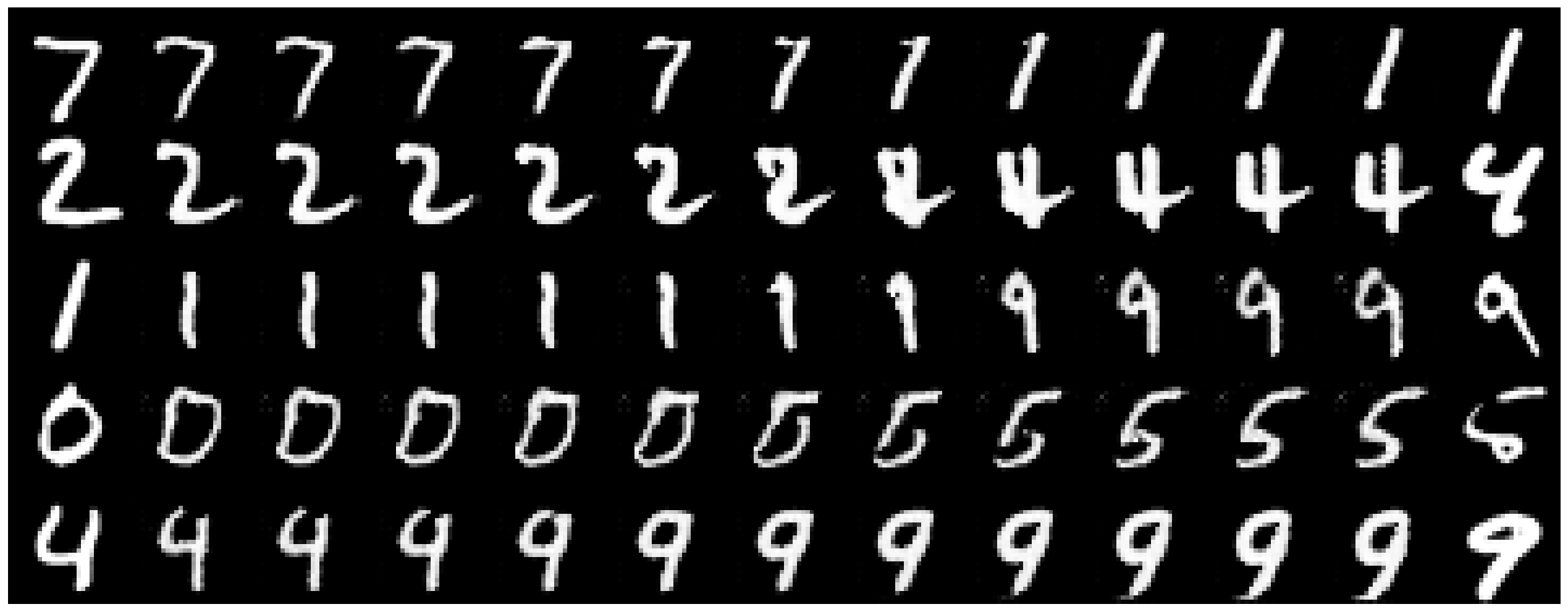}
\caption{\textbf{Morpho-MNIST Interpolations.} The columns on the extreme left and right denote real samples from the test set and the columns in between denote generated images for the linearly interpolated latent space representation $\textbf{z}$.}
\label{mnistint}
\end{figure}

%%\vspace{-2pt}
\section{Morpho-MNIST Reconstructions}
%\vspace{-5pt}
We qualitatively evaluate the inference model (Encoder) by sampling images along with their attributes from the test set and passing them through the encoder to obtain their latent space representations. These representations are passed to the generator which outputs reconstructions of the original image. The reconstructions are showed in Figure \ref{mnistrec}. Overall, reconstructions for Morpho-MNIST are faithful reproductions of the real images.
\begin{figure}[!htb]
\centering
\includegraphics[width = 0.8\columnwidth]{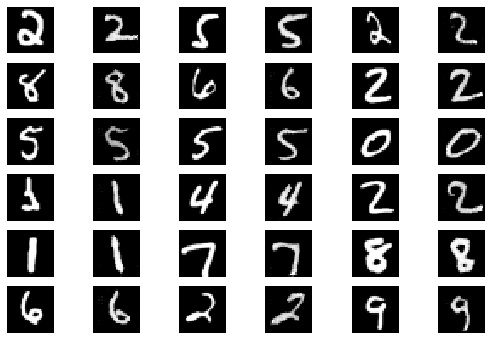}
\caption{\textbf{Morpho-MNIST Reconstructions.} Odd columns denote real images sampled from the test set, and even columns denote reconstructions for those real images.}
\label{mnistrec}
\end{figure}

%%\vspace{-5pt}
\section{Morpho-MNIST: Evaluating Label CFs}
%\vspace{-5pt}
As shown in Figure~\ref{mnistcnt}, we empirically evaluated the counterfactual generation on Morpho-MNIST by generating CFs that change the digit label for an image. To check whether the generated counterfactual image corresponds to the desired digit label, we use the output of a digit classifier trained on Morpho-MNIST images. Here we provide details of this pre-trained digit classifier.
The classifier architecture is shown in Table \ref{mnistclf}. The classifier converges in approximately 1k iterations with a validation accuracy of 98.30\%.
\begin{table}[!htb]
\centering

\begin{tabular}{lllclll}
\hline 
Layer     & F                     & K                     & \multicolumn{1}{l}{S}     & BN & D   & A   \\
\hline \\
Conv2D    & 32                    & (3,3)                 & \multicolumn{1}{l}{(1,1)} & N  & 0.0 & ReLU                  \\
Conv2D    & 64                    & (3,3)                 & \multicolumn{1}{l}{(1,1)} & N  & 0.0 & ReLU                  \\
MaxPool2D & \multicolumn{1}{c}{-} & (2,2)                 & -                         & N  & 0.0 & \multicolumn{1}{c}{-} \\
Dense     & 256                   & \multicolumn{1}{c}{-} & -                         & N  & 0.5 & ReLU                  \\
Dense     & 128                   & \multicolumn{1}{c}{-} & -                         & N  & 0.5 & ReLU                  \\
Dense     & 10                    & \multicolumn{1}{c}{-} & -                         & N  & 0.5 & Softmax       \\
\hline
\end{tabular}
\caption{\textbf{Morpho-MNIST Label Classifier}}
\label{mnistclf}
\end{table}
We then use the classifier to predict the labels of the counterfactual images and compare them to the labels that they were supposed to be changed to (target label).  Overall, 97.30\% of the predicted labels match the target label. Since the classifier is not perfect, the difference between the CF image's (predicted) label and the target label may be due to an error in the classifier's prediction or due to an error in CF generation, but this is minimal. We show some images for which predicted labels do not match the target labels in Figure \ref{mnistwrong}. Most of these images are digits that can be confused as another digit; for instance, the first image in row 2 is a counterfactual image with a target label of $9$, but was predicted by the digit classifier as a $1$.
% \amit{I edited this paragraph. check once}
\begin{figure}[!htb]
\centering
  \includegraphics[width = 0.8\columnwidth]{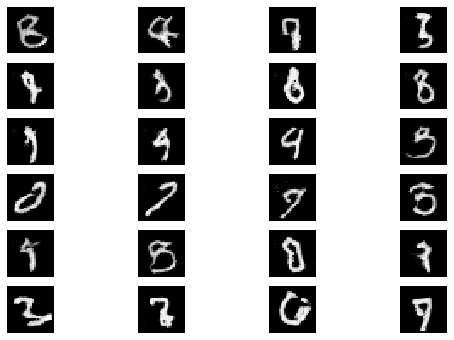}
  \caption{\textbf{Misclassified Counterfactuals.} Counterfactual images for Morpho-MNIST on which target label does not match predicted label.}
  \label{mnistwrong}
\end{figure}
% \amit{not necessary right--the error could also be due to CF-gen while the classifier can have correct prediction.}

%%\vspace{-4pt}
\section{DeepSCM Implementation on CelebA}

Since DeepSCM was not implemented on CelebA, we reproduce their method based on their available code and choose an appropriate VAE architecture for CelebA data. Tables \ref{CDec_DSCM} and \ref{CEnc_DSCM} show the Decoder and Encoder architectures respectively for generating CelebA counterfactuals. Specifically, we use Equation A.4 from the DeepSCM paper \cite{pawlowski2020deep} to model the image $\vecx$. We use a conditional VAE with a fixed variance Gaussian decoder to output a \textit{bias} to further reparametrize a Gaussian distribution using a location-scale transform (as stated in \cite{pawlowski2020deep}).
In our implementation of DeepSCM, we do not need to model the attributes $\veca$ using Normalizing Flows, since the attributes independently influence the images in the underlying SCM in case of CelebA (as explained in Section 5). 
We set the latent dimension to 256, use the Adam optimizer \cite{kingma2014adam} with a learning rate of 0.0005 to train the conditional VAE with a batch size of 128. Similar to \citet{pawlowski2020deep}, we adopt a constant variance assumption and set $\log \sigma^2 = -5$. We also preprocess the images by scaling them between [0,1); the preprocessing flow in Equation A.4 in \cite{pawlowski2020deep} is hence altered by normalizing its outputs between [0,1).

\begin{table}[!htb]
\centering

\begin{tabular}{lllllll}
\hline
Layer   & F   & K     & S   & BN & D   & A \\
\hline \\
Dense & 4096 & - & - & N  & 0.0 & Linear  \\
Conv2DT & 256 & (3,3) & (2,2) & Y  & 0.25 & LReLU  \\
Conv2DT & 128 & (3,3) & (2,2) & Y  & 0.25 & LReLU  \\
Conv2DT & 64 & (3,3) & (2,2) & Y  & 0.25 & LReLU  \\
Conv2DT & 32  & (3,3) & (2,2) & Y  & 0.25 & LReLU  \\
Conv2D & 3   & (3,3) & (1,1) & Y  & 0.0 & Sigmoid\\
\hline 
\end{tabular}
\caption{\textbf{Architecture for CelebA DeepSCM Decoder}}
\label{CDec_DSCM}
\end{table}

\begin{table}[!htb]
\centering

\begin{tabular}{lllllll}
\hline
Layer  & F   & K     & S     & BN & D   & A \\
\hline
Conv2D & 32  & (3,3) & (2,2) & Y  & 0.25 & LReLU  \\
Conv2D & 64 & (3,3) & (2,2) & Y  & 0.25 & LReLU  \\
Conv2D & 128 & (3,3) & (2,2) & Y  & 0.25 & LReLU  \\
Conv2D & 256 & (3,3) & (2,2) & Y  & 0.25 & LReLU  \\
Conv2D & 256 & (1,1) & (1,1) & N  & 0.0 & LReLU  \\
\hline   
\end{tabular}
\caption{\textbf{Architecture for CelebA DeepSCM Encoder}}
\label{CEnc_DSCM}
\end{table}
% \begin{figure}
%     \centering
%     \includegraphics[width = 0.99\columnwidth]{}
%     \caption{DeepSCM CFs on Morpho-MNIST}
%     \label{deepscm_morpho}
% \end{figure}

\section{Additional DeepSCM vs \ourmethod\ CelebA Counterfactuals}
In addition to Figure \ref{icfgen_dscm} of the main paper, we provide additional counterfactual images obtained using \ourmethod\ and DeepSCM in Figure \ref{icfgen_dscm_more}. As observed in Section 5.2, the counterfactuals generated using \ourmethod\ are more true to the corresponding intervention than DeepSCM.
\begin{figure}[!htb] 
\centering
%\vspace{-6pt}
\includegraphics[width = \columnwidth]{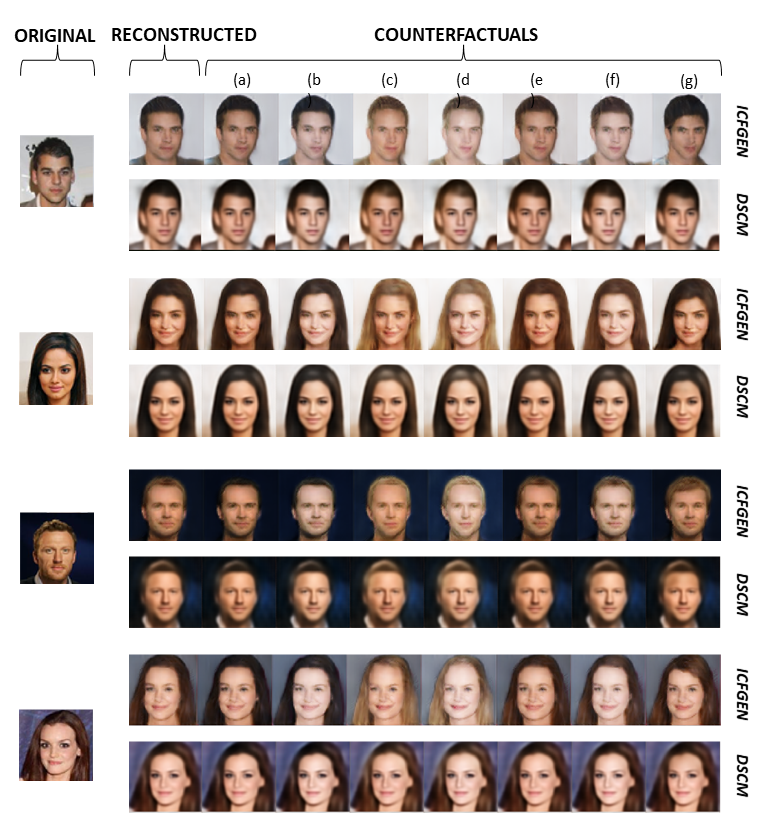}
%\vspace{-14pt}
\caption{\textbf{\ourmethod\ and DeepSCM Counterfactuals.} (a) denotes $do$ (black hair = 1) and (b) denotes $do$ (black hair = 1, pale =1). Similarly (c) denotes $do$ (blond hair = 1); (d) denotes $do$ (blond hair = 1, pale = 1); (e) denotes $do$ (brown hair = 1); (hf denotes $do$ (brown hair = 1, pale = 1); and (g) denotes $do$ (bangs = 1).}
%\vspace{-16pt}
\label{icfgen_dscm_more}
\end{figure}

\section{Additional \ourmethod\ CelebA Counterfactuals}
Figure \ref{celebacnt} shows more CF examples obtained using \ourmethod.
Note how the CF image is different w.r.t. the base (reconstructed) image only in terms of the intervened attribute. Consequently, if the attribute in the base image is already present, the CF image is exactly the same as the original image. For instance, (Ib) and (Ig) in Fig \ref{celebacnt} are exactly the same since (Ia) already has brown hair, hence intervening on brown hair has no effect on the hair color. 
\begin{figure}[!htb]
\centering
%%\vspace{-6pt}
  \includegraphics[width=1.0\columnwidth]{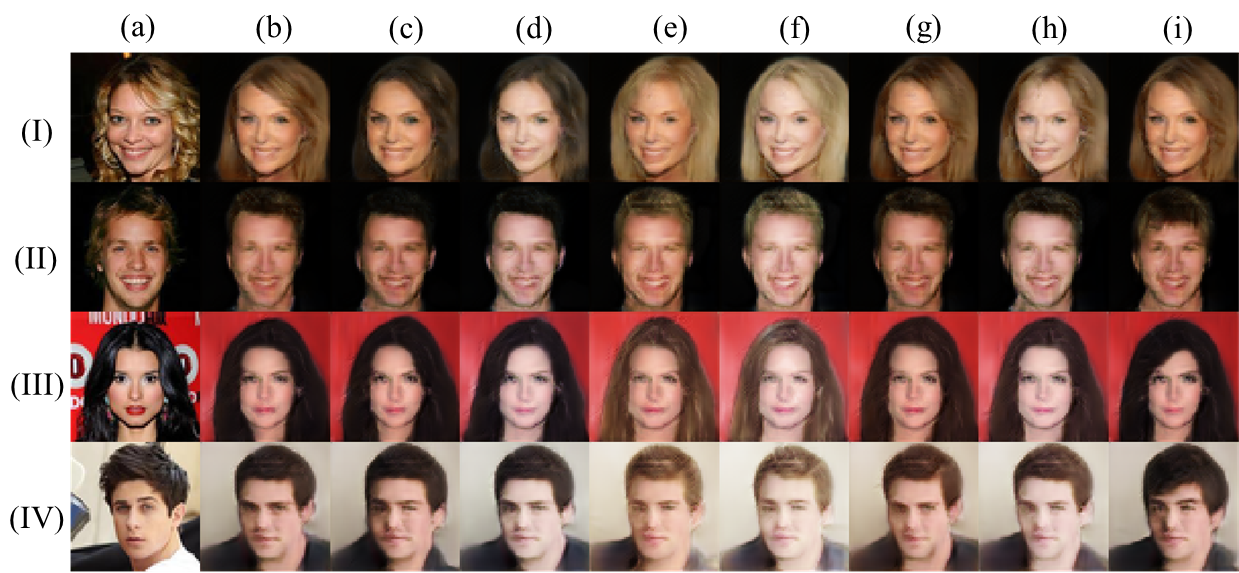}
  %\vspace{-12pt}
  \caption{\textbf{CelebA Counterfactuals.} Column (a) across all rows I, II, III, IV represent the real image and (b) represents the reconstructed image. (c) denotes $do$ (black hair = 1) and (d) denotes $do$ (black hair = 1, pale =1). Similarly (e) denotes $do$ (blond hair = 1); (f) denotes $do$ (blond hair = 1, pale = 1); (g) denotes $do$ (brown hair = 1); (h) denotes $do$ (brown hair = 1, pale = 1); and (i) denotes $do$ (bangs = 1).}
  %%\vspace{-10pt}
  \label{celebacnt}
\end{figure}

%%\vspace{-2pt}
\section{Ablation Study of Style-Based Generator and Cyclic Cost Minimization}
%\vspace{-5pt}
We plot the reconstructed images of real images, produced by ALI, StyleALI (ALI with style based generator) and Cyclic Style ALI (Style ALI with Cyclic Cost Minimization) in Figure \ref{celebaablation}. We observe that using a style-based generator significantly improves the quality of the generated images and applying the cyclic cost minimization algorithm on top of it improves reconstruction of the real image.
\begin{figure}[!htb]
\centering
  \includegraphics[width=1.0\columnwidth]{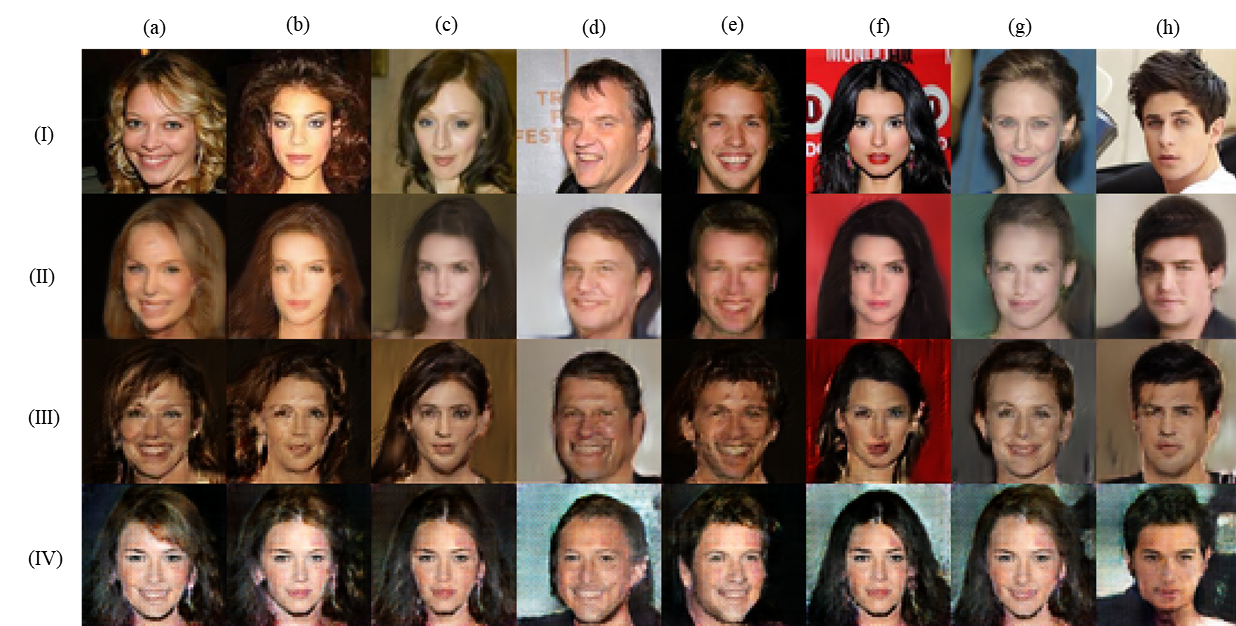}
  \caption{\textbf{Ablation Study.}
  Row (I) consists of real images, (II) consists of reconstructions generated by Cyclic Style ALI (CSALI), (III) Style ALI reconstructions and (IV) has ALI reconstructions }
  \label{celebaablation}
\end{figure}

%%\vspace{-5pt}
\section{Human Evaluation of Counterfactuals}
%\vspace{-5pt}
To quantitatively evaluate the counterfactuals, we asked human evaluators to pick the ``edited" (counterfactual) image from 10 randomly sampled pairs of reconstructed and counterfactual images to human evaluators and asked them to pick the \textit{edited} (counterfactual) image. Overall, we got 66 responses resulting in an average score of 5.15 correct answers out of 10 with a standard deviation of 1.64, indicating that the generated counterfactual distribution is perceptually indistinguishable from the reconstructed  one. Figure \ref{human_eval} shows sample questions on the form circulated for our human evaluation studies. For each question, the user chose one of two images that seems edited to the human eye. If the counterfactual can fool human perception, it indicates better performance of counterfactual generation.`
\begin{figure}[!htb]
    \centering
    \includegraphics[width=0.9\columnwidth]{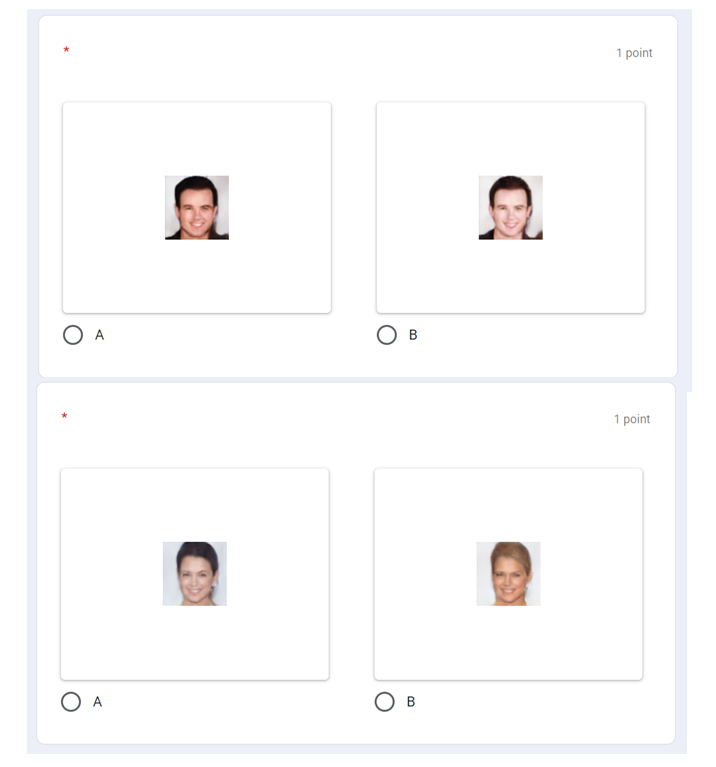}
    \caption{\textbf{Edited or Not?} Sample questions on the form circulated for human evaluation studies}
    \label{human_eval}
\end{figure}

%%\vspace{-5pt}
\section{{\em Attractive} Classifier Architecture}
%\vspace{-5pt}
We describe the architecture and training details for the {\em attractiveness} classifier whose fairness was evaluated in  Figure~\ref{celebafair}. The same classifier's output was explained in Figure~\ref{attrrank}.   \textit{Attractive} is a binary attribute associated with every image in CelebA dataset. The architecture for the binary classifier is  shown in Table \ref{attractiveclf}. 
\begin{table}[!htb]

\centering
\begin{tabular}{lllllll}
\hline
Layer  & F   & K     & S     & BN & D   & A \\
\hline
Conv2D & 64  & (2,2) & (1,1) & Y  & 0.2 & LReLU  \\
Conv2D & 128 & (7,7) & (2,2) & Y  & 0.2 & LReLU  \\
Conv2D & 256 & (5,5) & (2,2) & Y  & 0.2 & LReLU  \\
Conv2D & 256 & (7,7) & (2,2) & Y  & 0.2 & LReLU  \\
Conv2D & 512 & (4,4) & (1,1) & Y  & 0.5 & LReLU  \\
Conv2D & 512 & (1,1) & (1,1) & Y  & 0.5 & Linear \\
Dense & 128 & - & - & N  & 0.5 & ReLU \\
Dense & 128 & - & - & N  & 0.5 & ReLU \\
Dense & 1 & - & - & N  & 0.5 & Sigmoid \\
\hline
\end{tabular}
\caption{\textbf{Architecture for the \textit{Attractiveness} Classifier}}
\label{attractiveclf}
\end{table}
We use the Adam optimizer with a learning rate of $10^{-4}$, $\beta_1 = 0.5$ and a batch size of 128 for training the model. For the LeakyReLU activations, $\alpha = 0.02$. The model converges in approximately 20k iterations. All weights are initialized using the Keras truncated normal initializer with $mean = 0.0$ and $stddev = 0.01$. All biases are initialized with zeros.

%%\vspace{-5pt}
\section{Bias Mitigation Details}
%\vspace{-5pt}
We now present more details of our bias mitigation results in continuation to the discussion in Sec 5.4 of the main paper. To pick the optimal model while finetuning the classifier with the proposed bias mitigation regularizer, we set an accuracy threshold of 80\% and pick the model with the lowest bias. We use $\lambda=1.0$ and use an Adam optimizer with a learning rate of $10^{-4}$, $\beta_1 = 0.5$ and a batch size of 128 for training the model. 
\begin{table}[!htb]
\centering

\begin{tabular}{l|l|l}
        & Fair Classifier  & Biased Classifier \\
        \hline
        \rule{0pt}{2.6ex}black\_h, pale &  0.032 & 0.159 \\
blond\_h, pale & -0.041 & 0.077 \\
brown\_h, pale & 0.012 & 0.154
\end{tabular}
\caption{\textbf{Bias Values after Bias Mitigation.} Lower bias is better. Absolute bias values less than 5\% are not considered significant.}
\label{table3}
\end{table}

%%\vspace{-10pt}
\section{Complex Attribute SCM}
%\vspace{-5pt}
Our method can also be utilized in the setting where the attributes are connected in a complex causal graph structure, unlike \cite{denton2019detecting, joo2020gender}. We now conduct a fairness analysis w.r.t age for the \textit{Attractive} classifier, assuming that \textit{Young} affects other visible attributes like \textit{Gray hair}.

\begin{figure}[!htb]
\centering
  \includegraphics[width = 0.6\columnwidth]{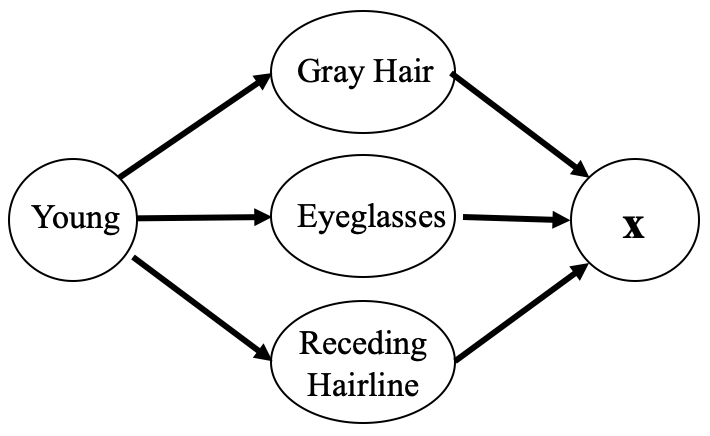}
  %%\vspace{-6pt}
  \caption{\textbf{\textit{Young} Attribute-SCM.}
  SCM that describes the relationship between the attribute \textit{Young} and the generated image.}
  %%\vspace{-12pt}
  \label{celebayoung}
\end{figure}

%\vspace{3pt}
\noindent \textit{Defining the Causal Graph.}
Fig \ref{celebayoung} shows a potential causal graph for the attribute \textit{Young}. While we ignored the attribute \textit{Young} in the previous experiments, we now propose a SCM that describes the relationship between \textit{Young} and a given image, mediated by a few facial attributes. Among the two demographic attributes, we choose \textit{Young} over \textit{Male} since \textit{Young} has some descendants that can be agreed upon in general, whereas using \textit{Male} would have forced us to explicitly model varying societal conventions surrounding this attribute. For e.g., the SCM proposed by \cite{kocaoglu2018causalgan} describes a causal relationship from \textit{gender} to \textit{smiling} and \textit{narrow eyes}, which is a problematic construct. For the attribute \textit{Young}, we specifically choose \textit{Gray Hair}, \textit{Eyeglasses} and \textit{Receding Hairline} since it is reasonable to assume that older people are more likely to have these attributes, as compared to younger ones. %We particularly don't include \textit{Bald} since that attribute also potentially has \textit{Male} as a parent and as explained above, we did not want to model this particular attribute. 

% \amit{need to explain the assumptions behind choosing these variables like gray hair}

%\vspace{3pt}
\noindent \textit{Learning the Attribute SCM.}
To learn the model parameters for the SCM shown in Figure~\ref{celebayoung}, we estimate conditional probabilities for each edge between \textit{Young} and the other attributes using maximum likelihood over the training data, as done in Bayesian networks containing only binary-valued nodes~\cite{pearl2011bayesian}. This SCM in Figure~\ref{celebayoung}, combined with the SCM from Fig \ref{celebagraph} that connects other facial attributes to the image, provides us the \textit{augmented} Attribute SCM that we use for the downstream task of CF generation. Once the Attribute SCM is learned, the rest of the steps in Algorithm 1 remain the same as before.  That is, based on this augmented Attribute SCM, the counterfactuals are generated according to the Eqn \ref{counterfactual}. For example,  to generate a CF changing \textit{Young} from 1 to 0, the Prediction step involves changing the values of gray hair, receding hairline and eyeglasses based on the modified value of \textit{Young}, according to the learned parameters (conditional probabilities) of the SCM.

\begin{figure}[!htb]
\centering
  \includegraphics[width = 0.9\columnwidth]{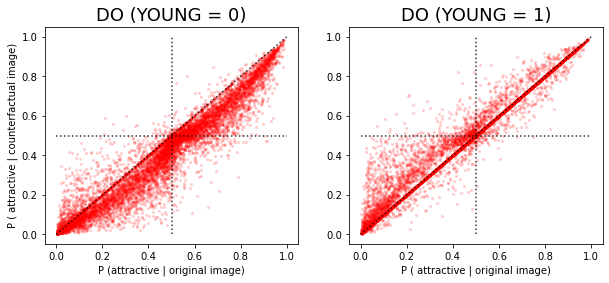}
  \caption{\textbf{Fairness Analysis for Complex SCM.}  Counterfactual images that change \textit{Young} from 1 to 0 (left), have a lower prediction score from the \textit{attractiveness} classsifier, while counterfactual images that change \textit{Young} from 0 to 1 (right), have a slightly higher prediction 
  score.}
  \label{celebacomplex}
\end{figure}

%\vspace{3pt}
\noindent \textit{Fairness Analysis.}
We conduct a fairness analysis similar as above, using the above attractiveness classifier. We generate counterfactuals for \textit{Young = 1} and \textit{Young = 0} according to the causal graph in Fig \ref{celebayoung}. The analysis is showed in Fig \ref{celebacomplex}; we observe that the classifier is evidently biased against \textit{Young = 0} and biased towards \textit{Young = 1}, when predicting \textit{attractive=1}. We quantify this bias using Eqn \ref{bias-simple}. Using counterfactuals that change \textit{Young} from 1 to 0, we get a negative bias of -0.136 and for changing \textit{Young} from 0 to 1, we get a positive bias of 0.087 which are both substantial biases assuming a threshold of 5\%. Therefore, given the causal graph from Figure~\ref{celebayoung}, our method is able to generate counterfactuals for complex high-level features such as age, and use them to detect any potential biases in a machine learning classifier.

\end{document}